\title{Assume-Guarantee Reinforcement Learning}
\author {
    Milad Kazemi,\textsuperscript{\rm 1}
    Mateo Perez, \textsuperscript{\rm 2}
    Fabio Somenzi, \textsuperscript{\rm 2}
    Sadegh Soudjani, \textsuperscript{\rm 3}
    Ashutosh Trivedi, \textsuperscript{\rm 2} 
    Alvaro Velasquez \textsuperscript{\rm 2}
}
\begin{document}
\maketitle

\begin{abstract}
    We present a modular approach to \emph{reinforcement learning} (RL) in environments consisting of simpler components evolving in parallel.
    A monolithic view of such modular environments may be prohibitively large to learn, or may require unrealizable communication between the components in the form of a centralized controller. 
    Our proposed approach is based on the assume-guarantee paradigm
    where the optimal control for the individual components is synthesized in isolation by making \emph{assumptions} about the behaviors of neighboring components, and providing \emph{guarantees} about their own behavior. 
    We express these \emph{assume-guarantee contracts} as regular languages and provide automatic translations to scalar rewards to be used in RL.
    By combining local probabilities of satisfaction for each component, we provide a lower bound on the probability of satisfaction of the complete system. 
    By solving a Markov game for each component, RL can produce a controller for each component that maximizes this lower bound. The controller utilizes the information it receives through communication, observations, and any knowledge of a coarse model of other agents.
    We experimentally demonstrate the efficiency of the proposed approach on a variety of case studies.
\end{abstract}

\section{Introduction}
\label{sec:introduction}
One approach to synthesize a distributed controller is to first synthesize a centralized controller and then decompose it into a controller for each component. However, the resulting controllers require the full state information of all the components in general, which may be unrealizable. In the case where there is only partial or no communication between each component, producing a distributed controller is undecidable for infinite time horizons and NP-hard for fixed time horizons~\citep{CCT16}. 
\emph{Can we apply reinforcement learning (RL) for distributed policy synthesis without incurring these costs while still providing guarantees on performance?} 
In addressing this question, we propose an {assume-guarantee} approach to RL, where we locally design a controller for each component by abstracting the behavior of neighboring components as an {assume-guarantee} contract. This contract defines a game: the opponent (environment) may produce the worst-case behavior for the neighboring components that still satisfy the assumption, while the controller maximizes the probability of satisfying the guarantee. The probabilities of satisfaction from each game can then be combined to provide a lower bound on the performance of the resulting distributed controller.

\usetikzlibrary{decorations.pathreplacing}

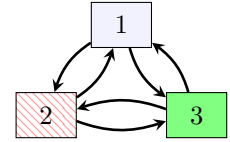
\begin{figure}[t]
    \centering
    \begin{tikzpicture}
    [box/.style={draw,rectangle,minimum width=8mm, minimum height=6mm}]
    \useasboundingbox (-1.5,0.4) rectangle (6,-3);
        \node[] (learn) [] {1) Solve Markov games};
        \node[] (compose) [right=3.75cm of learn] {2) Compose};
        \node[box, fill=blue!5] (mdp) [below left=1.5cm and 0.5cm of learn] {$1$};
        \node [cloud, draw,cloud puffs=11,cloud puff arc=120, aspect=4, minimum height=9mm, pattern=north west lines, pattern color=green] (cloud) [right=1.75cm of mdp] {};
        \node[] [above right=0.1cm and 1.55cm of mdp] {$2$};
        \node[] [below right=0.1cm and 1.9cm of mdp] {$3$};
        \node[] [below=0.75cm of mdp] {$\vdots$};
        \node[] [below=0.75cm of cloud] {$\vdots$};
        \draw [
            thick,
            decoration={
                brace,
                raise=0.6cm
            },
            decorate
        ] (mdp.west) -- (mdp.east)
        node [pos=0.5,anchor=north,yshift=1.2cm] {Guarantee};
        \draw [
            thick,
            decoration={
                brace,
                raise=0.6cm
            },
            decorate
        ] (cloud.west) -- (cloud.east)
        node [pos=0.5,anchor=north,yshift=1.2cm] {Assume};
        \path[->]
        (mdp) edge [bend right, line width=1pt] node {} (cloud)
        (cloud) edge [bend right, line width=1pt] node {} (mdp)
        ;
        \node[box, fill=blue!5] (m1) [above right=0.5cm and 3cm of cloud] {$1$};
        \node[box, fill=green!50] (m2) [below right=1.2cm and 1cm of m1] {$3$};
        \node[box, pattern=north west lines, pattern color=red!40] (m3) [below left=1.2cm and 1cm of m1] {$2$};
        \path[->]
        (m1) edge [bend right=20, line width=1pt] node {} (m2)
        (m1) edge [bend right=20, line width=1pt] node {} (m3)
        (m2) edge [bend right=20, line width=1pt] node {} (m1)
        ([yshift=.8mm]m2.west) edge [bend right=20, line width=1pt] node {} ([yshift=.8mm]m3.east)
        (m3) edge [bend right=20, line width=1pt] node {} (m1)
        ([yshift=-.8mm]m3.east) edge [bend right=20, line width=1pt] node {} ([yshift=-.8mm]m2.west)
        ;
    \end{tikzpicture}
    \label{fig:overview}
    \caption{An overview of assume-gurantee RL. We first form a two player game for each assume-guarantee contract and solve these games with RL. We then compose the resulting controllers and provide a lower bound on its performance.}
\end{figure}

Large environments are often composed of a number of smaller environments with well-defined and often slim interfaces. For instance, consider the traffic intersection signal control problem for a $3\times 3$-grid traffic network shown in Fig.~\ref{fig:traffic-schm} with $9$ intersections each equipped with a traffic light. 
The goal is to design policies to schedule the traffic light signals to keep congestion below some defined threshold. Note that the dynamics for each intersection only depends on the condition of adjacent intersections. It may be desirable to reduce unnecessary communication between the controllers of various intersections; in particular, a centralized control algorithm may be undesirable. Additionally, since the dynamics depend on the flow of traffic, the exact dynamics may not be known and, hence, it is desirable to use RL to design a control policy. 

\begin{figure}
    \centering
    \includegraphics[scale=0.35]{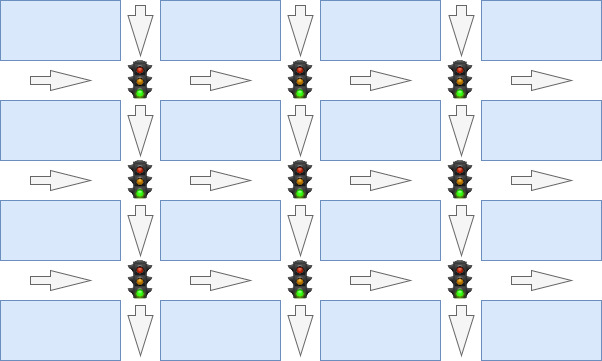}
    \caption{A traffic network with nine intersections.}
    \vspace{-1.5em}
    \label{fig:traffic-schm}
\end{figure}

We work with environments that are naturally decomposable into simpler entities, whose interaction with the rest of the environment is abstracted as an uncontrollable environment.
We apply RL to compute policies for individual components in the environment by making \emph{assumptions} on the behavior of other components in the environment, under which the correctness of the behavior of the individual component is \emph{guaranteed}.
Furthermore, we need to orchestrate the assumptions and guarantees in such a way that the success of RL for individual agents guarantees the success for the network.
We call such learning \emph{assume-guarantee} RL.

A na\"ive application of assume-guarantee reasoning quickly gets circular. 
For instance, if we can learn a controller for all the intersections in the previous example (Fig.~\ref{fig:traffic-schm}) assuming that the controllers in their adjacent intersections satisfy their objectives, does that mean that all intersections satisfy their objectives? 
Of course not: if all of the intersections operate away from the safety zone, the learning requirements on individual agents is vacuously satisfied, and the composed system is not guaranteed to satisfy the objective.
The same reasoning can be extended to more general assumptions.
Mathematical induction provides a way to circumvent this challenge. 
If all controllers begin in their safe zones (base case), and, at every step, assuming that the neighboring components are in the safe state allows each component to guarantee its safety for the next step (inductive step), then by induction it follows that all components in the system will indefinitely remain in their safe zones \citep{McMill98b}.

We generalize the inductive reasoning to handle objectives beyond safety by assuming that we are given qualitative regular specifications as deterministic finite automata (DFA) for all components in the system, and that, every component could receive the exact locations of its neighboring components on their objective DFA at every step either through communication or observation.
Under these assumptions, we derive probabilistic bounds on the global behavior by computing optimal policies for individual components locally with an adversarial view of the environment.
Building on this result, we employ minimax-Q learning~\citep{Littma94,Littma96} to compute policies for individual components providing guarantees on the global behavior. 
The proposed approach gives policies that can also utilize any knowledge of a coarse model of other components.
Our experimental evaluation demonstrates that such assume-guarantee RL is not only helpful in designing controllers with minimal communication, it also scales well due to 1) the reduced state space of individual RL agents and 2) the need to learn policies of homogeneous components only once (i.e., when the components have the same underlying model).

The closest data-driven approach to our problem setting is multi-agent reinforcement learning \citep{bucsoniu2010multi, tan1993multi, castellini2021analysing} that has the following two challenges. First, the multiple objectives in the multi-agent system may be in conflict with each other. Second, having multiple agents learn their policies at the same time may introduce non-stationarity to the system.
Our approach solves these two major challenges by enabling the learning to use coarse abstractions of other agents that are correct over-approximations of their behaviors. These abstractions bound the behavior of other agents and their states are used in the local policies through observations.

\paragraph{Contributions.}
The key contributions of this paper are summarized below:
\begin{enumerate}
    \item We address the problem of satisfying temporal properties on multi-agent systems \citep{ritz2020sat}, where the specification is modeled by a deterministic finite automaton.
    \item We provide a modular data-driven approach adapted to the underlying structure of the system that uses local RL on assume-guarantee contracts and assumes access to the current labels of other neighboring subsystems.
    \item The designed policies are able to use any knowledge of (coarse) abstractions of other agents and observations of states of such abstractions.
    \item We use the dynamic programming characterization of the solution to prove a lower bound for the satisfaction probability of the global specification using local satisfaction probabilities of contracts.
    \item We demonstrate the approach on multiple case studies: a multi-agent grid world, room-temperature control in a building, and traffic-signal control of intersections. 
\end{enumerate}

Due to space limitations, some proofs are presented in the appendix provided as the supplementary material.

\section{Problem Statement}
We write $\R$ and $\N$ to denote the set of real and natural numbers, respectively.
A finite sequence $t$ over a set $S$ is a finite ordered list; similarly, an infinite sequence $t$ is an infinite ordered list.
We write $S^*$ and $S^\omega$ for the set of all finite and infinite sequences over $S$.
For a sequence $t \in S^* {\cup} S^\omega$ we write $t(i)$ for its $i$-th element.
Similarly, for a tuple $t = (a_1, \ldots, a_{k})$ we write $t(i)$ for its $i$-th element.

\subsection{Network of Markov Decision Processes}
For assume-guarantee RL, we consider environments that consist of a network of component environments, each modeled by a finite Markov decision process (MDP).
Throughout, we assume a fixed set of Boolean observations over the state of the MDP called the atomic propositions $AP$.

\begin{definition}[MDPs]
An MDP $M$ is a tuple $(S,s_0, A, \mathbf P,\lab)$ where $S$ is a finite state space, $s_0\in S$ is the initial state, $A$ is the finite set of actions of the controller, $\mathbf P: S {\times} A {\times} S \to [0,1]$ is the transition function, and $\lab: S\to 2^{AP}$ is a state labeling function, mapping states to a subset of the atomic propositions.
The transition function is stochastic, i.e., it satisfies $\sum_{s'\in S} \mathbf P(s,a,s')\in\{0,1\}$ for all $s\in S$ and $a\in A$. 
\end{definition}
We assume the states are \emph{non-blocking}, i.e., there is at least one $a\in A$ for any $s\in S$ such that $\sum_{s'\in S} \mathbf P(s,a,s')=1$.
We denote by $A(s) = \{ a\,|\, \sum_{s'\in S} \mathbf P(s,a,s')=1\}$ the set of actions enabled at state $s \in S$. 
A state trajectory of $M$ is a sequence $s(0), s(1), \ldots \in S^\omega$ such that $s(0)=s_0$, and $s(n+1)\sim \mathbf P(s(n), a(n),\cdot)$ under the action $a(n)\in A(s(n))$ taken at time $n$.  

 \begin{definition}[Markov Games]
A Markov game $G$ is a tuple $(S,s_0, A, B, \mathbf P, \lab)$ where
        $S$ is a finite state space,
        $s_0$ is the initial state,
		$A$ and $B$ are finite sets of actions of the controller and the environment, respectively, 
        $\mathbf P: S \times (A \times B) \times S \to [0,1]$ is the transition function, and
        $\lab: S\to 2^{AP}$ is a state labeling function.
 \end{definition}
The semantics of a Markov game is a concurrent two-player game~\citep{Littma94} between two agents: the controller and its environment.
The state evolution of the Markov game is very similar to an MDP. The only difference is that the probability distribution over the next state depends on the actions chosen concurrently by the controller and the environment from the sets $A$ and $B$, respectively.

\begin{definition}[MDP Network]
    Consider a set $\mathcal{G}=\{G_1, G_2, \ldots, G_n\}$ of components (Markov games) where for every $1 \leq i \leq n$ the component $G_i$ is given by the tuple $(S_i, s_{0i}, A_i, B_i, \mathbf P_i, \lab_i)$.
    An \emph{MDP network} over $\mathcal{G}$ is a graph $(\mathcal{G}, \mathcal{E})$ whose vertices are associated with the components from $\mathcal{G}$ and the set of edges $\mathcal{E} \subset \mathcal{G} \times \mathcal{G}$ (that excludes self-loops)  represents the dependencies between the transition functions of various games. 
    \end{definition}
Given an edge from $G_j$ to $G_i$, the transition function of $G_i$ contains probabilities that depend on the state of $G_j$ modeled as actions chosen by the environment, i.e., for each $G_i$ we have that $B_i := \bigtimes_{j,G_j{\in}\text{Pre}(G_i)}  S_j$, where $\bigtimes$ is the Cartesian product and $\text{Pre}(G_i) = \set{G_j \::\: (G_j,G_i)\in \mathcal{E}}$.

\subsection{Coarse Abstraction of Components}
We incorporate additional knowledge on the dynamic behavior of other components with transition systems defined as Kripke structures. Such a transition system over-approximates all possible behaviors of a component.
	\begin{definition}[Kripke structure]
	    A Kripke structure $M$ over $\alphabet$ is a tuple $(S, I, R, L)$ where $S$ is finite set of states, $I \subseteq S$ is set of initial states, $R \subseteq S \times S$ is transition relation  such that $R$ is left-total, i.e., $\forall s \in S, \exists s' \in S \text{ such that } (s,s') \in R$, and $L: S \rightarrow \alphabet$ is labeling function.   
	\end{definition}

\subsection{Component Specifications}
A deterministic finite automaton (DFA) is a tuple $\DFA = (Q, q_0, F, \alphabet, \sigma)$ where $Q$ is the state space, $q_0$ is the initial state, $F$ is the accepting state, $\alphabet$ is the input alphabet and $\sigma: Q\times \alphabet\to Q$ is the transition function. The transition function $q' = \sigma(q,a)$ specifies the next state $q'\in Q$ from the current state $q\in Q$ under $a\in\alphabet$.
The language $\mathcal L(\DFA)$ of $\DFA$ is the set of sequences $a_0a_1a_2\ldots\in \alphabet^\ast$ such that in the sequence $q_0q_1q_2\ldots$ with $q_n = \sigma(q_{n-1},a_{n-1})$, $n\in\mathbb N$, we have $q_i\in F$ for some $i\in\mathbb N$.

Given two DFAs $\DFA_1 = (Q^1, q^1_{0}, F^1, \alphabet^1, \sigma^1)$ and $\DFA_2 = (Q^2, q^2_0, F^2, \alphabet^2, \sigma^2)$, we define their product DFA $\DFA_1{\times} \DFA_2$ as the tuple $ (Q^\times, q_0^\times, F^\times, \alphabet^\times, \sigma^\times)$, where 
$Q^\times {=} Q^1 {\times} Q^2$, $q_0^\times {=} (q^1_0, q^2_0)$, $F^\times {=} F^1 {\times} F^2$, $\alphabet^\times {=} \alphabet^1 {\times} \alphabet^2$, $\sigma^\times((q_1, q_2), (a_1, a_2)) = (\sigma^1(q_1, a_1), \sigma^2(q_2, a_2))$.
The accepting language of $\DFA_1\times \DFA_2$ is the product of $\mathcal L(\DFA_1)$ and $\mathcal L(\DFA_2)$.
In other words, the language accepted by
the product is the subset of $\alphabet^\times$ such that the projection of each element onto $\alphabet^1$ and $\alphabet^2$ is accepted by the respective automaton.
This construction can be extended to define the accepting language of multiple DFAs. 

We often use linear temporal logic (LTL) notation to succinctly express regular specifications. In particular, we write $\lozenge^n F$ to express bounded reachability property that within a finite bound $n$ a set satisfying $F$ is visited. Similarly, we write $\square^n F$ to express bounded invariance property that no state violating $F$ is visited within $n$ steps.
We write $\lozenge F$ and $\square F$ to capture unbounded reachability and invariance, respectively.

\begin{figure}[t]
    \centering
    \tikzstyle{block} = [draw=black, thick, text width=.6cm, minimum height=.5cm, align=center]  
    \tikzstyle{arrow} = [thick,->,>=stealth]
    \tikzstyle{input} = [coordinate]
    \scalebox{0.8}{
\begin{tikzpicture}
		\Large
		\node[block] (a) {$G_2$};
  \node[block, left=2cm of a.center, anchor=center] (f) {$G_1$};
		\node[input, above left=15mm and 12mm of f] (i) {I};
		\node[input, above left=2 mm and 12mm of f] (j) {J};
    \node[input, below left=2 mm and 12mm of f] (j1) {J1};
    \node[input, below left=2 mm and 12mm of a] (j2) {J2};

    \draw [arrow, rounded corners=2] (a) -- node[pos=0.5,above]{$s_2$}($(a)+(1.2,0)$) |- (i)--(j)--node[pos=0.5,above]{$s_2$}($(f.west)+(0,+.2)$);
    \draw [arrow] ($(f.east)+(0,+.2)$) --node[pos=0.5,above]{$s_1$} ($(a.west)+(0,+.2)$);
    \draw [arrow] (j1) --node[pos=0.5,below]{$a_1$} ($(f.west)+(0,-.2)$);
    \draw [arrow] (j2) --node[pos=0.5,below]{$a_2$} ($(a.west)+(0,-.2)$);

	\end{tikzpicture}
}
    \caption{Feedback composition of two Markov games $G_1$ and $G_2$ in an MDP network.}
    \label{fig:MDPs_feedback}
\end{figure}
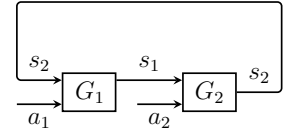

\subsection{Problem Definition}
For ease of presentation, we focus on the network of two components connected in a feedback loop as shown in Fig.~\ref{fig:MDPs_feedback}.
Our proofs can be extended to general networks in a straightforward fashion. 
We are given an MDP network $({\mathcal G}, {\mathcal E})$ with ${\mathcal G} = (G_1, G_2)$ and components $G_i = (S_i, s_{0i}, A_i, B_i, \mathbf P_i, \lab_i)$. 
We are also given a local specification $\phi_i$, modeled as a DFA $\DFA_i = (Q^i, q_{0}^i, F^i, \alphabet^i, \sigma^i)$, for each component $G_i$ for $i = \{1, 2\}$.
Without loss of generality, we assume that the states $F^1$ and $F^2$ of $\DFA_1$ and $\DFA_2$, are absorbing, thus we can interpret the global specification $\phi = \phi_1 \wedge \phi_2$ as reaching the accepting state $F^\times {=} F^1 {\times} F^2$ in the product $\DFA = \DFA_1\times \DFA_2$.

A policy is a recipe to select actions.
We represent the policies of both controllers as a pair of policies.
In particular, we distinguish between the following three classes of policies:
\begin{itemize}
\item {\bf Policies with full state information.} The set $\Pi_f$ of policies with full state information  consists of policy pairs $(\mu_1,\mu_2)$ having the form $\mu_i = (\mu_{i1},\mu_{i2},\mu_{i3}\ldots)$ where $\mu_{in}: S_1{\times} S_2 \to A_i$ selects the input actions at time $n$ for component $i$ based on the full state information of both components.

\item {\bf Policies with no communication.}
The set $\Pi_n$ of policies with no communication consists of policy pairs $(\mu_1,\mu_2)$ having the form $\mu_i = (\mu_{i1},\mu_{i2},\mu_{i3},\ldots)$ where $\mu_{in}:S_i \to A_i$, for $i\in\{1,2\}$ and $n\geq 0$, selects the input action for component $i$ based only on its local state.

\item {\bf Policies with limited communication.}
The set $\Pi_l$ of policies with limited communication consists of policy pairs $(\mu_1,\mu_2)\in\Pi_l$ having the form $\mu_i = (\mu_{i1},\mu_{i2},\mu_{i3},\ldots)$ with  $\mu_{in}: S_i\times Q_{3-i}\to A_i$, with $i\in\{1,2\}$, selects the input action for component $i$ based on the specification automaton state of the other component $3{-}i$.
\end{itemize}
Note that these definitions allow policies to be time-dependent: $\mu_{in}$ for selecting the input action of $G_i$ depends also on the time index $n$. This general form is needed to capture optimal policies for finite-horizon specifications. For infinite-horizon specifications, the policies can be chosen to be stationary (time-independent), and can also be chosen as limits of time-dependent policies. 

\begin{lemma}
We have that $\Pi_n\subset \Pi_l \subset \Pi_f$. Therefore, the following inequality holds for any global specification $\phi$ defined on the joint state evolution of the network:
\begin{equation}
    \sup_{\mu\in\Pi_n}\mathbb{P}^{\mu}(\phi)\le \sup_{\mu\in\Pi_l}\mathbb{P}^{\mu}(\phi)\le \sup_{\mu\in\Pi_f}\mathbb{P}^{\mu}(\phi),
\end{equation}
where $\mathbb{P}^{\mu}(\phi)$ is the probability of satisfying $\phi$ when policy $\mu$ is implemented in the MDP network.
\end{lemma}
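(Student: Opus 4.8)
The inequality will follow from the set inclusions $\Pi_n\subseteq\Pi_l\subseteq\Pi_f$ together with the elementary fact that the supremum of a function over a subset is at most its supremum over the superset, so the real task is to make the inclusions precise. First I would pin down the semantics: in an MDP network the environment inputs $B_i$ are instantiated by the states of the preceding components, hence once a policy pair $(\mu_1,\mu_2)$ is fixed the network closes into a (time-inhomogeneous) Markov chain on $S_1\times S_2$, and $\mathbb{P}^{\mu}(\phi)$ is the probability that the synchronous product of this chain with $\DFA_1\times\DFA_2$ — a chain on $S_1\times S_2\times Q^1\times Q^2$ — eventually visits $F^1\times F^2$. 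The one fact I will use repeatedly is that if two policy pairs prescribe the same action for every component along every finite trajectory prefix, then they induce the same chain, and hence the same value of $\mathbb{P}^{\cdot}(\phi)$; thus to prove $\sup_{\Pi_n}\le\sup_{\Pi_l}\le\sup_{\Pi_f}$ it suffices to show that every member of the smaller class is behaviourally equivalent to some member of the larger one.

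For $\Pi_n\subseteq\Pi_l$ the construction is immediate: given $(\mu_1,\mu_2)\in\Pi_n$ with $\mu_{in}\colon S_i\to A_i$, define $\tilde\mu_{in}\colon S_i\times Q_{3-i}\to A_i$ by $\tilde\mu_{in}(s,q)\Let\mu_{in}(s)$, i.e.\ discard the automaton argument. Then $(\tilde\mu_1,\tilde\mu_2)\in\Pi_l$ chooses exactly the same actions as $(\mu_1,\mu_2)$ along every trajectory, so $\mathbb{P}^{\tilde\mu}(\phi)=\mathbb{P}^{\mu}(\phi)$, and taking suprema yields the first inequality. Strictness of the inclusion is witnessed by any policy that genuinely depends on $q\in Q_{3-i}$ when $|Q_{3-i}|\ge 2$, but strictness is not needed for the inequality.

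The only step requiring care is $\Pi_l\subseteq\Pi_f$, because, read literally, the two classes have incomparable domains: $S_i\times Q_{3-i}$ versus $S_1\times S_2$, and $q^{3-i}(n)$ is \emph{not} a function of the current state $(s_1(n),s_2(n))$ — it is the deterministic image of the label history $\lab_{3-i}(s_{3-i}(0))\lab_{3-i}(s_{3-i}(1))\cdots$ under $\sigma^{3-i}$. The resolution is that this is precisely the information tracked by the product-automaton component on which $\mathbb{P}^{\mu}(\phi)$ is already defined: in the product chain on $S_1\times S_2\times Q^1\times Q^2$ the coordinate $q^{3-i}$ at time $n$ equals $q^{3-i}(n)$, and a controller with full state information observes this coordinate. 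Hence, given $(\mu_1,\mu_2)\in\Pi_l$, the policy $\mu'_{in}(s_1,s_2,q^1,q^2)\Let\mu_{in}(s_i,q^{3-i})$ has full state information and is behaviourally identical to $\mu$, so $\mathbb{P}^{\mu'}(\phi)=\mathbb{P}^{\mu}(\phi)$; taking suprema gives the second inequality, and composing the two proves the lemma. If one instead insists on the literal signature $S_1\times S_2\to A_i$ for $\Pi_f$, the same construction works once $\Pi_f$ is read as consisting of history-dependent full-information policies — recomputing $q^{3-i}(n)$ from the observed prefix — and it is standard that allowing history dependence does not change the optimal reach probability, so the supremum is unchanged. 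I expect this reconciliation of the policy domains to be the main obstacle; everything else is bookkeeping.
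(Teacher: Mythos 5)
Your proposal is correct, and it follows the only argument available: the paper states this lemma without proof (it is not among the results proved in the appendix), implicitly relying on exactly the ``subset implies smaller supremum'' reasoning you spell out. The one place where you go beyond bookkeeping is the inclusion $\Pi_l\subseteq\Pi_f$, and your concern there is genuine: as literally defined, $\mu_{in}\colon S_i\times Q_{3-i}\to A_i$ uses the automaton state $q_{3-i}(n)$, which is a function of the label \emph{history} of the other component, not of the current joint state $(s_1(n),s_2(n))$, so a $\Pi_l$ policy is not syntactically a $\Pi_f$ policy. Your two resolutions are the right ones; just be aware that your fallback (``history dependence does not change the optimal reach probability'') must be applied to the product chain on $S_1\times S_2\times Q^1\times Q^2$, where the reachability target $F^1\times F^2$ lives --- over the bare state space $S_1\times S_2$ the optimal policy for a DFA objective generally \emph{does} need the automaton state as memory, so the supremum over literal $S_1\times S_2\to A_i$ policies could a priori be smaller. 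Reading $\Pi_f$ as full information about the product state (or as history-dependent full-information policies), as you do, closes this gap and the lemma follows.
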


\begin{theorem}[Policies and Complexity~\citep{CCT16}]
    The optimal policy with full state information can be computed in polynomial time, while the problem of computing optimal policy with limited or no observation is undecidable for infinite horizon objectives and $\textsc{NP}$-hard for fixed-horizon problems.
\end{theorem}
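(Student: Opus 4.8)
The plan is to treat the three claims separately: the polynomial-time claim follows from standard MDP planning once we take a product with the specification automaton, while the undecidability and NP-hardness claims are obtained by transferring the corresponding known results for (decentralized) partially observable control, following~\citep{CCT16}. For the full-information case, first I would observe that the feedback composition of $G_1$ and $G_2$ together with the product automaton $\DFA = \DFA_1 \times \DFA_2$ and a policy pair in $\Pi_f$ induces a plain \emph{centralized} MDP $\widehat M$ on the state space $S_1 \times S_2 \times Q^1 \times Q^2$ whose controller action set is $A_1 \times A_2$ --- crucially there is no information asymmetry, since both controllers observe the full product state. Under this reduction the global objective $\phi = \phi_1 \wedge \phi_2$ becomes reachability of the absorbing set $F^1 \times F^2$, and maximizing reachability probability over all policies of a finite MDP is solvable in polynomial time, e.g.\ by the classical linear program (for the finite-horizon variant one instead unrolls $\widehat M$ over the horizon $n$ and runs backward value iteration). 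Since $|\widehat M|$ is polynomial in the sizes of the $G_i$ and $\DFA_i$, the whole procedure is polynomial; the only thing to verify is that the product construction is measure-faithful, i.e.\ that the trajectory distribution of $\widehat M$ projects onto that of the network paired with its induced DFA runs, which is routine.

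For the undecidability of the infinite-horizon problem under limited or no observation, the plan is a reduction from a problem already known to be undecidable, namely the existence of a policy exceeding a given acceptance-probability threshold in a POMDP, equivalently the threshold-emptiness problem for probabilistic finite automata. One embeds such an instance as a single component $G_1$, taking $G_2$ and the communication channel degenerate, in which the controller's actions play the role of input letters while the local observation hides the automaton state, so that a DFA objective $\lozenge F$ asks precisely whether some input word is accepted above threshold. The subtle point is matching our notion of qualitative DFA satisfaction and our threshold semantics to the exact flavour of undecidability quoted, and checking that the degenerate instance lives in our policy classes: it uses no communication at all, so it is witnessed already inside $\Pi_n$, and by the preceding Lemma $\Pi_n \subset \Pi_l$, which yields undecidability for both.

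For NP-hardness of the fixed-horizon problem, the plan is a polynomial reduction from an NP-hard problem such as $3$-SAT, again along the lines of~\citep{CCT16}: the controller's choices over the fixed horizon $n$ encode a truth assignment, a small probabilistic gadget tests the clauses, and the threshold is calibrated so that it is met if and only if the formula is satisfiable; with $n$ given as part of the input the reduction and the instance size stay polynomial, and membership of the hard instance in $\Pi_n \subset \Pi_l$ is argued as above.

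I expect the main obstacle to be bookkeeping rather than any genuine mathematical difficulty: precisely aligning our model (networks of Markov games with DFA specifications, the three policy classes, absorbing accepting sets, and the adopted satisfaction-probability semantics) with the formulation in~\citep{CCT16} so that the cited undecidability and NP-hardness transfer essentially verbatim, and confirming that the degenerate second component and trivial communication channel used in the two hardness reductions are admissible under our Definition of an MDP network. Once that model translation is pinned down, each of the three parts reduces to a short appeal to a known result.
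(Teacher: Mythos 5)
The paper does not prove this theorem at all: it is imported verbatim from~\citep{CCT16} as a known complexity result, so there is no in-paper argument to compare against. Judged on its own merits, your sketch for the full-information case is correct and standard (product with $\DFA_1\times\DFA_2$, reachability of the absorbing set $F^1\times F^2$ in a centralized MDP of polynomial size, linear programming or backward induction), and only that part would survive as written.

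The two hardness reductions, however, have a genuine gap. You propose to make $G_2$ degenerate and embed a PFA/POMDP instance (or a 3-SAT gadget) in $G_1$ alone, relying on ``the local observation hides the automaton state.'' But under the paper's definitions nothing is hidden locally: a policy in $\Pi_n$ has the form $\mu_{1n}\colon S_1\to A_1$ and a policy in $\Pi_l$ the form $\mu_{1n}\colon S_1\times Q_2\to A_1$, so the controller of $G_1$ always observes its \emph{own} full state $s_1$. With a trivial second component your construction therefore collapses to a fully observable single-agent MDP, which is polynomial-time solvable; no undecidability or NP-hardness can be extracted from it. The partial observability in this model lives entirely in the cross-component direction --- controller $i$ cannot see $s_{3-i}$, only (at best) the DFA state $q_{3-i}$ --- so the hardness must be driven by the information asymmetry between two genuinely interacting components. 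The correct targets for the reductions are the decentralized/team-decision results: NP-hardness of the static team decision problem (two decision makers with different observations of a correlated state) for the fixed-horizon claim, and undecidability of infinite-horizon decentralized control of the Madani-style flavour, where the unobservable ``memory'' of the hard instance is carried in the state of the \emph{other} component (or in the joint correlation), not in the agent's own. As written, your plan to ``argue membership of the hard instance in $\Pi_n$'' via a degenerate second component cannot be repaired without redesigning both gadgets around a nontrivial two-component coupling.
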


\begin{assumption}
\label{ass:obs}
For taking action at each time step $n$, the component $G_i$ of the Markov game $\mathcal{G}=\{G_1, G_2, \ldots, G_n\}$ do not have access to the exact state trajectories $s_j(0),s_j(1),\ldots,s_j(n)$ of other components, but it has knowledge of all labeling functions $\lab_j:S_j\rightarrow \alphabet^j$ and the labels $\lab_j(s_j(0)),\ldots,\lab_j(s_j(n))$ along the trajectories of the neighboring components $G_j\in B_i$ either through communication or observation.
Component have policies under which the local properties are satisfied with positive probability. 
Moreover, the components are aware of coarse abstractions of their neighboring components modeled with Kripke structures $M_i = (S_i, I_i, R_i, L_i)$, and observe the transitions in these abstract models.
\end{assumption}
For example, in robotic applications, this assumption could be seen as knowing the mission of other robots and observing which part of the environment they move to in each time step, but not knowing their exact location, their velocity or other physical variables.

A policy that maximizes the satisfaction probability $\mathbb{P}^{\mu}(\phi)$ will be in general a member of $\Pi_f$ that requires the knowledge of the state of the neighboring Markov games. Under Assumption~\ref{ass:obs}, we have restricted the class of policies to have only knowledge of the labels of their neighbors. This limited observation necessitates finding policies for partially observed MDPs (POMDPs).

Computing exact optimal policies on POMDPs requires constructing a belief state, which allows a POMDP to be formulated as an MDP that models the evolution of the belief state~\citep{kaelbling1998planning}. The resulting belief MDP is defined on a continuous state space (even if the original POMDP has a finite state space). This makes the exact solution of the problem computationally intractable \citep{kaelbling1998planning,littman1995efficient}. 
We develop an approximation method by considering the worst-case scenario of the neighboring components under the assumption that we have unlimited control for simulating each MDP in the network.

\begin{problem}[Compositional Synthesis]
\label{prob:lower_extra}
Given components $G_i = (S_i, s_{0i}, A_i, B_i, \mathbf P_i, \lab_i)$, their local specifications $\phi_i$ modeled as DFA $\DFA_i = (Q^i, q_{0}^i, F^i, \alphabet^i, \sigma^i)$, their coarse abstractions modeled with a Kripke structure $M_i = (S_i, I_i, R_i, L_i)$ for $i \in \{1, 2\}$, and a bound $p_\text{min}$, find policies $(\mu_1,\mu_2)\in\Pi_l$ under Assumption~\ref{ass:obs} such that  \[
\mathbb{P}^{\mu_1, \mu_2}(\phi_1\wedge \phi_2) \geq p_\text{min}.
\]
\end{problem}

\begin{remark}
Our first observation is that for the theoretical results, one can take the product of the DFA $\mathcal A_j$ and the Kripke structure $M_j$ and use the product state for the policy synthesis. Note that the non-determinism in the product originates from the coarse abstraction and will be resolved adversarially by taking the worst-case over the non-deterministic transitions. Therefore, we consider only DFA $\mathcal A_i$ without $M_i$ for derivations in the next section.
\end{remark}

Closed-form models of systems are often unavailable or too complex to reason about directly. Model-free reinforcement learning~\citep{Sutton18} is a sampling-based approach to synthesize controllers that compute the optimal policies without constructing a full model of the system, and hence are asymptotically more space-efficient than model-based approaches.
To solve both of aforementioned problems, we develop a sound assume-guarantee based RL algorithm for the compositional synthesis problem.
\blue{}
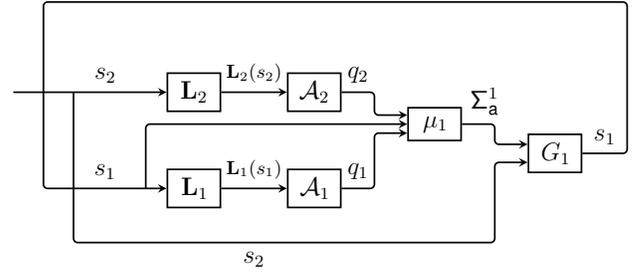
\begin{figure}
    \centering
\tikzstyle{block} = [draw=black, thick, text width=.6cm, minimum height=.5cm, align=center]  
\tikzstyle{arrow} = [thick,->,>=stealth]
\tikzstyle{input} = [coordinate]
\scalebox{0.8}{
\begin{tikzpicture}
		\Large
		\node[block] (a) {$\DFA_2$};
		\node[block, left=2cm of a.center, anchor=center] (g) {$\lab_2$};
		\node[block, below=16mm of a] (b) {$\DFA_1$};
		\node[block, left=2cm of b.center, anchor=center] (o) {$\lab_1$};
		\node[input, left=30mm of g] (y) {Y};
		\node[input, right=10mm of y] (yu) {Yu};
		\node[input, below=25mm of g] (c) {C};
		\node[input, below=4mm of c] (l) {L};
		\node[input, above left=15mm and 25mm of g] (i) {I};
		\node[input, below=31mm of i] (j) {J};
		\node[input, left=8mm of o] (k) {K};

		\node at ($(a.south)+(2,-.2)$) [block] (e) {$\mu_1$};
		\node at ($(e.south)+(2,-.2)$) [block] (f) {$G_1$};

		\draw [arrow, rounded corners=2] (a) -- node[pos=0.5,above] {$q_2$}($(a)+(1,0)$) |- ($(e.west)+(0,.15)$);
		\draw [arrow, rounded corners=2] (b) -- node[pos=0.5,above] {$q_1$}($(b)+(1,0)$) |- ($(e.west)+(0,-.15)$);
		\draw [arrow, rounded corners=2] (yu) |- (c) -- node[pos=0.2,below] {$s_2$}($(c)+(5,0)$) |- ($(f.west)+(0,-.15)$);

		\draw [arrow, rounded corners=2] (e) -- node[pos=0.5,above] {$~~\alphabet^1$}($(e)+(1,0)$) |- ($(f.west)+(0,.15)$);
		\draw [arrow, rounded corners=2] (f) -- node[pos=0.5,above]{$s_1$}($(f)+(1.2,0)$) |- (i)--(j)--node[pos=0.5,above]{$s_1$}(o);
		\draw [arrow, rounded corners=2] (k) |- (e);
		\draw [arrow] (g) --node[pos=0.5,above]{{\small$\lab_2(s_2)$}} (a);
		\draw [arrow] (o) --node[pos=0.5,above]{{\small$\lab_1(s_1)$}} (b);
		\draw [arrow] (y) --node[pos=0.6,above]{{$s_2$}} (g);

	\end{tikzpicture}
}
    \caption{The control structure for $G_1$. The state $s_2$ automatically affects the evolution of $G_1$. The policy uses the state of both automata $\DFA_1,\DFA_2$.
    }
    \label{fig:my_label2}
\end{figure}

\section{Assume-Guarantee RL}
 
We start by presenting the dynamic programming formulation of the probability of satisfying the global specification $\phi = \phi_1\wedge \phi_2$. When considering the joint state evolution of the Markov games and their DFAs $\{(s_i(n),q_i(n)),n \geq 0\}$, this probability is
 \begin{align}\notag
    \mathbb{P}^{\mu_1, \mu_2}&(\phi_1\wedge \phi_2) = \mathbb{E}^{\mu_1, \mu_2}\left[\exists_n\left(q_1(n){\in} F_1 {\wedge} q_2(n){\in} F_2\right)\right]\\
    & = \mathbb{E}^{\mu_1, \mu_2}\left[\sum_{n=0}^{\infty}\mathbf{1}\left(q_1(n){\in} F_1 {\wedge} q_2(n){\in} F_2\right)\right].
 \end{align}
The infinite sum on the right-hand side can be interpreted as the limit of the partial sum whose expectation can be denoted by $v_n = \mathbb{E}^{\mu_1, \mu_2}\left[\sum_{k=0}^{n}\mathbf{1}\left(q_1(k){\in} F_1 {\wedge} q_2(k){\in} F_2\right)\right]$ with $n \geq 0$ such that 
 \begin{equation}
    \mathbb{P}^{\mu_1, \mu_2}(\phi_1\wedge \phi_2)= \lim_{n\to \infty} v_n.
 \end{equation}
The values $v_n$ are generally computed recursively as a function of states $(s_{1},s_{2},q_{1},q_{2})$. To get $\mathbb{P}(\phi_1\wedge \phi_2)$ for an initial state $\overline{s}_0 = (s_{01},s_{02})$,  we need to evaluate the right-hand side at $(s_{01},s_{02},q_{01},q_{02})$. We can use dynamic programming to characterize $v_n$,
 which gives $v_0(s_1, s_2, q_1, q_2)=
    \mathbf{1} (q_1\in F_1) \mathbf{1} (q_2\in F_2)$, and 
    $v_{n+1}(s_1, s_2, q_1, q_2)$ equals $1$ if 
    $(q_1,q_2)\in F_1\times F_2$ and equals  
    \begin{equation}
        \label{eq:dp}
    \mathbb E^{\mu_1,\mu_2}\left[ v_n(\bar s_1, \bar s_2, \bar q_1, \bar q_2)\,|\, s_1, s_2, q_1, q_2 \right]
    \end{equation}
    otherwise, 
where $\bar{q_i} = \sigma_i(q_i,\lab_i(s_i))$,\, $i\in\{1,2\}$. The expectation is taken over the one-step transition probability matrices of the product MDP under the policies $(\mu_1, \mu_2)$.
The optimal policies that maximize $\mathbb{P}(\phi_1\wedge \phi_2)$ are obtained by maximizing the right-hand side of \eqref{eq:dp} with respect to $(\mu_1, \mu_2)$. This will give policies $\mu_1(s_1, s_2, q_1, q_2)$ and $\mu_2(s_1, s_2, q_1, q_2)$ that belong to $\Pi_f$ and require full state information. Since local MDPs have only access to limited observations and will use policies from $\Pi_l$, we discuss in the following how such policies can be designed using the framework of stochastic games.

Similar to the value functions $v_n$ in \eqref{eq:dp} for the specification $\phi_1\wedge\phi_2$, let us define local value functions $v^1_n$ and $v^2_n$ as
 $v^1_n(s_1, q_1, q_2)= v^2_n(s_2, q_1, q_2) = 1$
for all $q_1\in F_1, q_2\in F_2, s_1\in S_1, s_2\in S_2$ and $n\in\mathbb N_0$,
and
\begin{align}
& v^1_{n+1}(s_1, q_1, q_2)=\label{eq:v1}\\ 
      &\,\min_{\bar\ell\in \lab_2^{+}(q_2)}\max_{a_1}\min_{s_2\in \lab_2^{-1}(\bar\ell)}
    \sum_{\bar s_1}v^1_n(\bar s_1, \bar q_1, \bar q_2)\mathbf P_1(s_1,a_1,s_2, \bar s_1),\nonumber\\
    & v^2_{n+1}(s_2, q_1, q_2) =\label{eq:v2}\\ & \,\min_{\bar\ell\in \lab_1^{+}(q_1)}\max_{a_2}\min_{s_1\in \lab_1^{-1}(\bar\ell)}
    \sum_{\bar s_2}v^2_n(\bar s_2, \bar q_1, \bar q_2)\mathbf P_2(s_2, a_2,s_1,\bar s_2),\nonumber
\end{align}
where $\lab_i^{+}(q_i)$ is the set of all one step reachable labels $\bar \ell$ from current state of the automaton $q_i$, $\lab_i^{+}(q_i) := \{\bar\ell\,|\, \sigma_i(q_i, \bar \ell)\neq\emptyset\}$, and $v^1_0(s_1, q_1, q_2) = v^2_0(s_2, q_1, q_2) = 0$
for all $(q_1,q_2)\notin F_1\times F_2$, 
with $\bar q_i=\sigma_i(q_i, \lab_i(s_i))$,\,$i\in\{1,2\}$.
The value function $v^i_n$ depends on the local state $(s_i,q_i)$ and the information received from the neighboring component encoded in $q_{3-i}$, $i\in\{1,2\}$. Intuitively, these value functions give a lower bound on the probability of satisfying the local specification within the time horizon $n$.

We next show that the computation of value functions in \eqref{eq:dp} can be decomposed into the computation of local value functions $v^1_n$ and $v^2_n$. The product of these local value functions gives a lower bound for the original value functions of \eqref{eq:dp}. Each local value function gives a local policy that produces local actions only based on the limited observations available locally.

 \begin{theorem}
 \label{thm:lower_prob}
The value functions $v_n,v_n^1,v_n^2$ defined in \eqref{eq:dp},\eqref{eq:v1},\eqref{eq:v2} satisfy the inequality
\begin{equation}
\label{eq:value_function}
    v_n(s_1, s_2, q_1, q_2)\geq v^1_n(s_1, q_1, q_2)v^2_n(s_2, q_1, q_2),
\end{equation}
for all $n\in\mathbb N_0$ and all $s_1, s_2, q_1, q_2$.
Moreover, the policy $(\mu_{i0},\mu_{i1},\mu_{i2},\ldots)\in\Pi_l$ computed as $\mu_{in}:S_i\times Q^i\times Q^{3-i}\rightarrow A_i$, $i\in\{1,2\}$ that maximizes the right-hand sides of \eqref{eq:v1}--\eqref{eq:v2} will generate the obtained lower bound for the value of the global game $v_n$:
\begin{equation}
\mathbb{P}^{\mu_1, \mu_2}(\phi_1\wedge \phi_2)\ge  \lim_{n\to \infty} v_n^1\times  \lim_{n\to \infty} v_n^2.
\end{equation}
\end{theorem}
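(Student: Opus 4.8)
The plan is to prove the inequality \eqref{eq:value_function} by induction on $n$, and then deduce the probabilistic lower bound by taking limits and invoking the characterization of $\mathbb{P}^{\mu_1,\mu_2}(\phi_1\wedge\phi_2)$ as $\lim_{n\to\infty} v_n$. The base case $n=0$ is immediate: $v_0(s_1,s_2,q_1,q_2) = \mathbf 1(q_1\in F_1)\mathbf 1(q_2\in F_2)$, while $v^1_0 v^2_0$ is $1$ exactly when $q_1\in F_1$ and $q_2\in F_2$ and $0$ otherwise, so the two sides agree. For the inductive step, fix $(s_1,s_2,q_1,q_2)$; if $(q_1,q_2)\in F_1\times F_2$ then $v_{n+1}=1$ and the product of local values is at most $1$, so the inequality holds trivially. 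Otherwise we must compare the expectation in \eqref{eq:dp} — taken under the policy $(\mu_1,\mu_2)$ over the true joint transition $\mathbf P_1$, $\mathbf P_2$ with the coupling induced by $B_i = S_{3-i}$ — against $v^1_{n+1}(s_1,q_1,q_2)\, v^2_{n+1}(s_2,q_1,q_2)$.

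The key observation is that in the MDP network of Fig.~\ref{fig:MDPs_feedback}, the environment action $b_1$ that $G_1$ faces is exactly $s_2$, and symmetrically $b_2 = s_1$. Thus the true one-step joint transition from $(s_1,s_2)$ under $(a_1,a_2)$ factors as $\mathbf P_1(s_1,a_1,s_2,\bar s_1)\cdot\mathbf P_2(s_2,a_2,s_1,\bar s_2)$ (the two components evolve independently given each other's current states). Using this product structure, the expectation in \eqref{eq:dp} becomes $\sum_{\bar s_1,\bar s_2} v_n(\bar s_1,\bar s_2,\bar q_1,\bar q_2)\,\mathbf P_1(s_1,a_1,s_2,\bar s_1)\,\mathbf P_2(s_2,a_2,s_1,\bar s_2)$ with $\bar q_i = \sigma_i(q_i,\lab_i(s_i))$. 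Applying the inductive hypothesis $v_n(\bar s_1,\bar s_2,\bar q_1,\bar q_2)\ge v^1_n(\bar s_1,\bar q_1,\bar q_2)\,v^2_n(\bar s_2,\bar q_1,\bar q_2)$ and the fact that all quantities are nonnegative, the sum splits into a product:
\[
\Bigl(\sum_{\bar s_1} v^1_n(\bar s_1,\bar q_1,\bar q_2)\,\mathbf P_1(s_1,a_1,s_2,\bar s_1)\Bigr)\Bigl(\sum_{\bar s_2} v^2_n(\bar s_2,\bar q_1,\bar q_2)\,\mathbf P_2(s_2,a_2,s_1,\bar s_2)\Bigr).
\]
Now I must show that, choosing $a_1,a_2$ according to the maximizers of \eqref{eq:v1}--\eqref{eq:v2}, each factor dominates the corresponding local value. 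The point is that $v^1_{n+1}(s_1,q_1,q_2)$ is obtained by taking $\min$ over adversarial choices of the next label $\bar\ell\in\lab_2^+(q_2)$ and over $s_2\in\lab_2^{-1}(\bar\ell)$ — but the \emph{actual} $s_2$ the network presents is one admissible such choice (with $\bar\ell = \lab_2(\bar s_2)$ lying in the appropriate reachable set because $\bar q_2 = \sigma_2(q_2,\lab_2(s_2))$ — here one checks the indexing so the label feeding $\DFA_2$'s transition is consistent with $\lab_2^+$). Hence the worst-case local value is no larger than the value obtained against the true $s_2$, i.e. $v^1_{n+1}(s_1,q_1,q_2)\le \max_{a_1}\sum_{\bar s_1} v^1_n(\bar s_1,\bar q_1,\bar q_2)\mathbf P_1(s_1,a_1,s_2,\bar s_1)$, and symmetrically for $v^2$. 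Multiplying these two inequalities (again using nonnegativity) and chaining through the factorization above yields $v_{n+1}\ge v^1_{n+1} v^2_{n+1}$, completing the induction.

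For the final probabilistic bound: the policies $\mu_{in}:S_i\times Q^i\times Q^{3-i}\to A_i$ extracted as the $\arg\max$ in \eqref{eq:v1}--\eqref{eq:v2} lie in $\Pi_l$ by construction (they depend only on the local MDP state and the two DFA states, the latter reconstructable from observed labels under Assumption~\ref{ass:obs}). Running these policies in the network gives a genuine value sequence $v_n$ satisfying the DP recursion \eqref{eq:dp}, hence $\mathbb{P}^{\mu_1,\mu_2}(\phi_1\wedge\phi_2) = \lim_n v_n \ge \lim_n v^1_n v^2_n = (\lim_n v^1_n)(\lim_n v^2_n)$, where existence of the limits follows from monotonicity of $v^i_n$ in $n$ (each indicator sum is nondecreasing, or equivalently the value iteration is monotone and bounded by $1$) and continuity of multiplication. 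The main obstacle I anticipate is the careful bookkeeping in the inductive step: verifying that the true next-state $s_2$ of $G_2$ is always a feasible argument of the inner $\min_{s_2\in\lab_2^{-1}(\bar\ell)}$ — i.e. that the adversarial ranges $\lab_2^+(q_2)$ and $\lab_2^{-1}(\bar\ell)$ genuinely over-approximate what the network can produce, including the subtle point (noted in the Remark) that any nondeterminism from the Kripke abstraction $M_2$ is also resolved adversarially by these $\min$s — and ensuring the DFA-state arguments $\bar q_1,\bar q_2$ are threaded identically through $v_n$ and through $v^1_n\cdot v^2_n$ so that the inductive hypothesis applies verbatim.
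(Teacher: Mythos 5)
Your proposal is correct and follows essentially the same route as the paper's proof: induction on $n$ with the base case at $n=0$ (or $(q_1,q_2)\in F_1\times F_2$), factorization of the one-step joint transition into $\mathbf P_1\cdot\mathbf P_2$, application of the inductive hypothesis to split the expectation into a product, and then lower-bounding each factor by the adversarial $\min$ over labels and label-consistent states (since the true $s_2$ is one feasible adversarial choice), with the final probabilistic bound obtained by passing to the limit. Your explicit bookkeeping of the feasibility of the true next state within $\lab_2^{+}(q_2)$ and $\lab_2^{-1}(\bar\ell)$ is a point the paper leaves implicit, but the argument is the same.
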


\subsection{Assume-Guarantee Interpretation}
\begin{theorem}
\label{thm:AGI}
    Let $\lozenge^n F$ denote reachability to a set $F$ within finite time bound $n$, and suppose policies $\mu_1,\mu_2$ are given, and according to Assumption~\ref{ass:obs} there are runs of the systems $(s_1(0), \ldots, s_1(n))$ and $(s_2(0), \ldots, s_2(n))$ that satisfy respectively $\lozenge^n F_1$ and $\lozenge^n F_2$. Then, 
\begin{align}
\mathbb{P}&^{\mu_1,\mu_2}(\lozenge^n(F_1\wedge F_2))\geq\nonumber\\
&\inf_{(s_2(0), \ldots, s_2(n))\in\Gamma_2}\mathbb{P}^{\mu_1}(\lozenge^n F_1\,|\, s_2(0), \ldots, s_2(n))\nonumber\\
 &\times\inf_{(s_1(0), \ldots, s_1(n))\in\Gamma_1}\mathbb{P}^{\mu_2}(\lozenge^n F_2\,|\, s_1(0), \ldots, s_1(n)),\label{eq:prob_lower}
\end{align}
where the infimum is taken over the set of satisfying runs:
$$\Gamma_i \!:=\!\left\{(s_i(0), \ldots, s_i(n))\,|\, (\lab_i(s_i(0),\ldots,\lab_i(s_i(n))\!\models\!\lozenge^n F_i\right\}.$$
\end{theorem}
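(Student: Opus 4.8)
The plan is to obtain Theorem~\ref{thm:AGI} as the assume--guarantee reading of a policy-evaluation version of Theorem~\ref{thm:lower_prob}, specialized to bounded reachability $\phi_i=\lozenge^n F_i$ for the fixed pair $\mu_1,\mu_2$. Two preliminary reductions set up the correspondence. First, since the accepting sets $F_1,F_2$ are absorbing in $\DFA_1,\DFA_2$, a joint run satisfies $\lozenge^n(F_1\wedge F_2)$ iff it satisfies both $\lozenge^n F_1$ and $\lozenge^n F_2$: if $q_1(k_1)\in F_1$ and $q_2(k_2)\in F_2$ with $k_1,k_2\le n$, then at $k=\max(k_1,k_2)\le n$ both automata are still in their accepting states, and the converse is trivial. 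Second, whether $\DFA_i$ reaches $F_i$ within $n$ steps is a deterministic function of the label prefix $\lab_i(s_i(0)),\dots,\lab_i(s_i(n))$, so the event $\lozenge^n F_i$ coincides with $\{(s_i(0),\dots,s_i(n))\in\Gamma_i\}$. Hence the left-hand side of \eqref{eq:prob_lower} equals $\mathbb{P}^{\mu_1,\mu_2}(\lozenge^n F_1\cap\lozenge^n F_2)$, i.e.\ the probability of reaching $F^\times=F_1\times F_2$ in the product automaton within $n$ steps, which is exactly $v_n$ evaluated at the initial states in \eqref{eq:dp} for the global specification.

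Next I would build the dictionary between the right-hand side of \eqref{eq:prob_lower} and the local value functions of \eqref{eq:v1}--\eqref{eq:v2}. For the \emph{given} policy $\mu_1$ define $w^1_n$ by the backward recursion obtained from \eqref{eq:v1} by replacing $\max_{a_1}$ with the action prescribed by $\mu_1$ (and analogously $w^2_n$ from \eqref{eq:v2}). The claim is $w^1_n(s_{01},q_{01},q_{02})=\inf_{\gamma_2\in\Gamma_2}\mathbb{P}^{\mu_1}(\lozenge^n F_1\mid\gamma_2)$, where the conditioned quantity is the within-$n$ reach probability of $F_1$ when $G_1$ runs under $\mu_1$ with its environment input forced to follow the run $\gamma_2$. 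This is shown by unrolling the recursion: the inner $\min_{\bar\ell\in\lab_2^{+}(q_2)}\min_{s_2\in\lab_2^{-1}(\bar\ell)}$ is the adversary choosing, among labels that keep $\DFA_2$ able to reach $F_2$ within the remaining horizon (the constraint ``$\gamma_2\in\Gamma_2$''), the worst next label and then the worst $G_2$-state carrying it; once the whole run $\gamma_2$ is committed, $G_1$ under $\mu_1$ is an ordinary time-inhomogeneous Markov chain whose reach probability is computed by the remaining (inner) dynamic program. The symmetric identity holds for $w^2_n$ and $\mu_2$.

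Finally I would prove the product bound $v_n(s_{01},s_{02},q_{01},q_{02})\ge w^1_n(s_{01},q_{01},q_{02})\,w^2_n(s_{02},q_{01},q_{02})$ by backward induction on $n$, mirroring the proof of Theorem~\ref{thm:lower_prob} but with the fixed policies $\mu_1,\mu_2$ in place of the maximizing actions: the base case follows from the boundary conditions; in the inductive step one expands $v_{n+1}$ over the one-step joint transition (a product of $\mathbf P_1$ and $\mathbf P_2$, since the two successor draws are conditionally independent given the current joint state), lower-bounds each factor by passing to its adversarial worst-label/worst-state value, applies the induction hypothesis, and uses that a sum of products of nonnegative terms dominates the product of a worst-cased factor with the full sum of the other. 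Since the horizon $n$ is fixed no limit is needed; evaluating at the initial states and substituting the dictionary of the previous paragraph yields exactly \eqref{eq:prob_lower}.

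The main obstacle is the dictionary step of the second paragraph — identifying the local value function with a worst case taken over \emph{satisfying} environment runs. Two points need care: (i) the bookkeeping that restricts the adversary to label sequences under which $\DFA_{3-i}$ still reaches $F_{3-i}$ in the remaining horizon, so that the infimum genuinely ranges over $\Gamma_{3-i}$ and not over all runs (and the degenerate cases where $\Gamma_{3-i}$ is empty are excluded by the hypothesis that satisfying runs exist); and (ii) the justification that a step-by-step adversary, whose choice at step $k$ may depend only on the information available then, attains the same value as an adversary that fixes the entire run $\gamma_{3-i}$ in advance — this holds precisely because $G_i$'s transition at step $k$ depends on the environment only through its prefix up to step $k$, and this causality is exactly what a naive, acausal ``circular'' assume--guarantee argument lacks. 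If one is content to \emph{define} $\mathbb{P}^{\mu_i}(\lozenge^n F_i\mid\gamma_{3-i})$ as this forced-environment reach probability and to take \eqref{eq:v1}--\eqref{eq:v2} (with $\mu_i$) as the definitions of the local values, this step collapses to a routine unrolling and the remaining work is the induction of the third paragraph.
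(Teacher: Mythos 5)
Your proposal is correct and follows essentially the same route as the paper: fix the policies so that the local recursions \eqref{eq:v1}--\eqref{eq:v2} lose their $\max_{a_i}$, identify the resulting value with $\inf_{\Gamma_{3-i}}\mathbb{P}^{\mu_i}(\lozenge^n F_i\mid\cdot)$, and combine with the product inequality of Theorem~\ref{thm:lower_prob}. Your treatment is in fact slightly more careful than the paper's in two places it glosses over --- re-deriving the product bound for the policy-evaluation (rather than optimizing) value functions, and justifying that the step-by-step adversary matches the infimum over whole runs in $\Gamma_{3-i}$.
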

\begin{proof}
When the policies are given, the definition of $v^1$ reduces to 
\begin{equation*}
\begin{array}{l}
v^1_{n+1}(s_1, q_1, q_2)=\\
\\
\left\{
\begin{array}{ll}
     1& \text{if } (q_1,q_2)\in F_1\times F_2\\
      \min_{\bar\ell\in \lab_2^{+}(q_2)
      }\min_{s_2\in \lab_2^{-1}(\bar \ell)}\left[\right.&\\
     \quad\sum_{\bar s_1}\left(\right.v^1_n(\bar s_1, \bar q_1, \bar q_2)\cdot&\\
     \qquad\mathbf P_1^{\mu_1}(s_1,a_1,s_2, \bar s_1)\left.\right)\left.\right]& \text{if } (q_1,q_2)\notin F_1\times F_2,
\end{array}
\right.
\end{array}
\end{equation*}
This recursive definition is exactly the computation of $v^1_n$ defined as
    $v^1_n(s_1, q_1, q_2) = 0$ if there is no sequence $(s_2(0), \ldots, s_2(n))$ satisfying $\lozenge^n F_2$ when the automaton $\DFA_2$ is initialized at $q_2$,
    Otherwise, $
v^1_n(s_1, q_1, q_2)=
\inf_{\Gamma_2} \mathbb{P}^{\mu_1}(\lozenge^n F_1\,|\, s_2(0), \ldots, s_2(n))
$.
This concludes the proof considering the fact that $\mathbb{P}^{\mu_1,\mu_2} (\lozenge^n(F_1\wedge F_2)) = v_n(s_1,s_2, q_{10}, q_{20})$ and combining it with inequality \eqref{eq:value_function}.
\end{proof}
\begin{corollary}
By taking the limit $n\rightarrow\infty$ from the inequality \eqref{eq:prob_lower} and then optimizing with respect to policies, we get the following result under Assumption~\ref{ass:obs}:
\begin{align}
&\sup_{(\mu_1,\mu_2)\in\Pi_l}\!\!\!\mathbb{P}^{\mu_1,\mu_2} (\phi_1\wedge\phi_2)
 \geq\nonumber\\
&\,\quad\sup_{\mu_1\in\Pi_l}\inf_{(s_2(0), s_2(1),\ldots)\models \phi_2}\mathbb{P}^{\mu_1}(\phi_1\,|\, s_2(0), s_2(1),\ldots)\nonumber\\
&         \times\sup_{\mu_2\in\Pi_l}\inf_{(s_1(0),s_1(1),\ldots)\models \phi_1}\mathbb{P}^{\mu_2}(\phi_2\,|\, s_1(0),s_1(1),\ldots).\label{eq:lower_bound_inf}
\end{align}
\end{corollary}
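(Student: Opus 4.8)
The plan is to take the horizon-$n$ bound of Theorem~\ref{thm:AGI} and (i) let $n\to\infty$ for each fixed policy pair, then (ii) take a supremum over $\Pi_l$, exploiting that $\Pi_l$ is a Cartesian product of the two local policy sets.

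\textbf{Step 1 (rewrite the finite-horizon bound).} Fix $(\mu_1,\mu_2)\in\Pi_l$. Since $\lozenge^n(F_1\wedge F_2)\subseteq\lozenge(F_1\wedge F_2)$ and $\phi_1\wedge\phi_2$ is reachability of $F^\times=F_1\times F_2$, inequality~\eqref{eq:prob_lower} already yields, for \emph{every} $n$,
\[
\mathbb{P}^{\mu_1,\mu_2}(\phi_1\wedge\phi_2)\;\ge\;\mathbb{P}^{\mu_1,\mu_2}(\lozenge^n(F_1\wedge F_2))\;\ge\;P_n(\mu_1)\,Q_n(\mu_2),
\]
where $P_n(\mu_1):=\inf_{\Gamma_2}\mathbb{P}^{\mu_1}(\lozenge^n F_1\mid s_2(0),\dots,s_2(n))$ and $Q_n(\mu_2)$ is its mirror image with the roles of the two components swapped. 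By the identity derived inside the proof of Theorem~\ref{thm:AGI}, $P_n(\mu_1)$ equals the value iterate $v^1_n$ of \eqref{eq:v1} (with $\max_{a_1}$ replaced by the fixed action of $\mu_1$) evaluated at the initial state, and similarly $Q_n(\mu_2)=v^2_n$.

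\textbf{Step 2 (monotone limit).} A one-line induction on \eqref{eq:v1}--\eqref{eq:v2}, starting from $v^i_0=\mathbf 1(q_1\in F_1)\mathbf 1(q_2\in F_2)$ and using that the operations $\min_{\bar\ell}$, $\min_{s_2}$ and $\sum_{\bar s_1}(\cdot)\,\mathbf P_1$ are monotone, shows $v^1_n\le v^1_{n+1}$ and $v^2_n\le v^2_{n+1}$ pointwise. Hence $P_n(\mu_1)\uparrow P_\infty(\mu_1)$ and $Q_n(\mu_2)\uparrow Q_\infty(\mu_2)$ for some limits in $[0,1]$. Since both sequences are non-negative and non-decreasing, $\sup_n P_n(\mu_1)Q_n(\mu_2)=\big(\sup_n P_n(\mu_1)\big)\big(\sup_n Q_n(\mu_2)\big)=P_\infty(\mu_1)Q_\infty(\mu_2)$; taking $\sup_n$ in the display of Step~1 gives $\mathbb{P}^{\mu_1,\mu_2}(\phi_1\wedge\phi_2)\ge P_\infty(\mu_1)Q_\infty(\mu_2)$.

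\textbf{Step 3 (identify the limit and optimise over policies).} Next one identifies $P_\infty(\mu_1)=\inf_{(s_2(0),s_2(1),\dots)\models\phi_2}\mathbb{P}^{\mu_1}(\phi_1\mid s_2(0),s_2(1),\dots)$, and symmetrically for $Q_\infty$: an infinite run whose labels satisfy $\phi_2=\lozenge F_2$ has a finite prefix satisfying $\lozenge^n F_2$, and along any run $\mathbb{P}^{\mu_1}(\lozenge^n F_1\mid\cdot)\uparrow\mathbb{P}^{\mu_1}(\phi_1\mid\cdot)$ by continuity of probability for the increasing events $\lozenge^n F_1$; conversely a horizon-$n$ worst-case environment prefix reaches $F_2$ and, $F_2$ being absorbing, extends to an infinite $\phi_2$-run, and a König-lemma argument on the finitely-branching tree $S_2^{\ast}$ of environment prefixes lets one pass to the limit along a single infinite environment run. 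Finally, $\Pi_l$ decomposes as the product of the two local policy sets $\{\mu_1:S_1\times Q^1\times Q^2\to A_1\}$ and $\{\mu_2:S_2\times Q^2\times Q^1\to A_2\}$, and $(\mu_1,\mu_2)\mapsto P_\infty(\mu_1)Q_\infty(\mu_2)$ is a product of a non-negative function of $\mu_1$ alone with a non-negative function of $\mu_2$ alone, so $\sup_{(\mu_1,\mu_2)\in\Pi_l}P_\infty(\mu_1)Q_\infty(\mu_2)=\big(\sup_{\mu_1}P_\infty(\mu_1)\big)\big(\sup_{\mu_2}Q_\infty(\mu_2)\big)$. Taking $\sup_{(\mu_1,\mu_2)\in\Pi_l}$ on both sides of the inequality from Step~2 and using these two facts yields exactly \eqref{eq:lower_bound_inf}.

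\textbf{Main obstacle.} Steps~1, 2 and the policy-factorisation in Step~3 are routine (monotone convergence, and $\sup_{x,y}f(x)g(y)=\sup_x f\cdot\sup_y g$ for non-negative $f,g$ on a product domain). The delicate point is the first half of Step~3: the infimum at horizon $n$ ranges over a set of satisfying prefixes that \emph{grows} with $n$, so interchanging the $n\to\infty$ limit with this infimum---equivalently, showing the value-iteration limit coincides with the adversary's optimal value in the infinite-horizon game---genuinely needs the compactness argument above rather than a one-line monotonicity remark. If one instead reads the right-hand side of \eqref{eq:lower_bound_inf} as the value-iteration limit by definition, this obstacle disappears and the corollary is immediate from Steps~1, 2 and the factorisation.
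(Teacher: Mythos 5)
Your Steps 1 and 2, and the sup-factorisation at the end of Step 3, are sound and follow exactly the route the paper intends (the paper offers no argument beyond the phrase ``take the limit and optimise,'' so your write-up is already more explicit than the original). The genuine gap is precisely where you flag it, and the K\"onig-lemma sketch does not close it. Write $P_n(\mu_1)=\inf_{\Gamma_2}\mathbb{P}^{\mu_1}(\lozenge^n F_1\mid s_2(0),\ldots,s_2(n))$ for the horizon-$n$ quantity in \eqref{eq:prob_lower} and $A(\mu_1)=\inf_{(s_2(0),s_2(1),\ldots)\models\phi_2}\mathbb{P}^{\mu_1}(\phi_1\mid s_2(0),s_2(1),\ldots)$ for the quantity appearing in \eqref{eq:lower_bound_inf}. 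The easy direction gives only $P_\infty(\mu_1)\le A(\mu_1)$: every infinite satisfying run eventually has all its prefixes in $\Gamma_2$, so each such run upper-bounds $P_n$ for large $n$. For the corollary you need the \emph{reverse} inequality $P_\infty\ge A$, because the chain you actually establish is $\mathbb{P}^{\mu_1,\mu_2}(\phi_1\wedge\phi_2)\ge P_\infty(\mu_1)\,Q_\infty(\mu_2)$, and replacing the right-hand side by the (a priori larger) product $A(\mu_1)B(\mu_2)$ is an upward substitution on the lower-bound side. Your compactness argument cannot deliver this: (i) the infinite path extracted from the tree of horizon-$n$ minimisers $\omega_n^*$ need not satisfy $\phi_2$ at all, since $\omega_n^*$ may reach $F_2$ only at step $n$ and these hitting times can escape to infinity; and (ii) extending a fixed $\omega_n^*$ to an infinite run satisfying $\phi_2$ only yields $\mathbb{P}^{\mu_1}(\phi_1\mid\omega)\ge\mathbb{P}^{\mu_1}(\lozenge^n F_1\mid\omega_n^*)=P_n$, which is the wrong direction, because after time $n$ the component keeps accumulating probability of reaching $F_1$.

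To see that the interchange is genuinely false rather than merely unproved, let $G_1$ be able to make progress toward $F_1$ only while $\DFA_2$ is in $F_2$, and let the adversary be able to delay entering $F_2$ until the final step of the horizon. Then $P_n=0$ for every $n$, so $P_\infty=0$, whereas every infinite run satisfying $\phi_2$ enters the absorbing set $F_2$ at some finite time and leaves $G_1$ unbounded time afterwards, so $A=1$. Hence $\lim_n$ and the infimum over satisfying runs do not commute in general. The repair you yourself propose is the right one and is evidently what the paper means: read the right-hand side of \eqref{eq:lower_bound_inf} as $\sup_{\mu_1}P_\infty(\mu_1)\cdot\sup_{\mu_2}Q_\infty(\mu_2)$ with $P_\infty,Q_\infty$ the monotone limits of the value iterations \eqref{eq:v1}--\eqref{eq:v2}; under that reading the corollary follows from your Steps 1, 2 and the factorisation with no compactness argument at all. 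As literally written, with the infimum ranging over infinite satisfying runs, the corollary asserts a strictly stronger bound than Theorem~\ref{thm:AGI} delivers, and your proof (like the paper's one-line derivation) does not establish it.
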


\subsection{The Case of No Communication}
When there is no communication between the individual components, the policy $(\mu_1,\mu_2)$ will be in the set $\Pi_n$: each component takes action using only the information of its own state and considering the worst case behavior of the other components. 
 Since there is no interaction between the subsystems, the optimization now targets the entire set of labels $\alphabet^2$ instead of just the labels one-step reachable on the automaton state $\bar\ell\in \lab_2^{+}(q_2)$.
The satisfaction probability computed under no-communication will be lower than the probability under limited communication, as formalized next.
\begin{theorem}
We have that %
\begin{align}
    \MoveEqLeft\sup_{(\mu_1,\mu_2)\in\Pi_n}\mathbb{P}^{\mu_1,\mu_2} (\phi_1\wedge\phi_2)
    \geq\nonumber\\
    &\left[
\sup_{\mu_1}\inf_{s_2\in S_2}\mathbb{P}^{\mu_1}(\phi_1)\right]
         \cdot\left[\sup_{\mu_2}\inf_{s_1\in S_1}\mathbb{P}^{\mu_2}(\phi_2)\right],
         \label{eq:lower_bound_inf2}
\end{align}
where the first term in the right-hand side is computed fully on the $G_1$ by assuming that the state of $G_2$ can be anywhere in its state space (similarly for the second term). 
Moreover, the lower bound in \eqref{eq:lower_bound_inf} improves the one in \eqref{eq:lower_bound_inf2}. 
\end{theorem}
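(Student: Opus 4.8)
The statement has two parts: the lower bound \eqref{eq:lower_bound_inf2} on $\sup_{(\mu_1,\mu_2)\in\Pi_n}\mathbb{P}^{\mu_1,\mu_2}(\phi_1\wedge\phi_2)$, and the fact that the bound \eqref{eq:lower_bound_inf} dominates it. The plan is to prove them in that order.

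For the lower bound I would replay the proof of Theorem~\ref{thm:lower_prob} with a single change: replace the limited-communication value functions $v^1_n(s_1,q_1,q_2),v^2_n(s_2,q_1,q_2)$ of \eqref{eq:v1}--\eqref{eq:v2} by \emph{no-communication} value functions $w^1_n(s_1,q_1),w^2_n(s_2,q_2)$ in which the adversarial neighbour ranges over its whole state space rather than over the one-step reachable labels of the neighbour automaton, i.e.\ $w^1_0(s_1,q_1)=\mathbf{1}(q_1\in F_1)$ and $w^1_{n+1}(s_1,q_1)=\max_{a_1}\min_{s_2\in S_2}\sum_{\bar s_1}w^1_n(\bar s_1,\bar q_1)\mathbf P_1(s_1,a_1,s_2,\bar s_1)$ with $\bar q_1=\sigma_1(q_1,\lab_1(s_1))$, and symmetrically for $w^2$. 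Equivalently, this is the specialisation of Theorem~\ref{thm:lower_prob} in which the automaton each component assumes for its neighbour is the one-state always-accepting automaton. Letting $(\mu_1^{(n)},\mu_2^{(n)})\in\Pi_n$ attain the maxima in these recursions, an induction on $n$ parallel to the proof of Theorem~\ref{thm:lower_prob} shows that running this pair in the feedback composition makes the probability of reaching $F_1\times F_2$ within $n$ steps at least $w^1_n(s_1,q_1)\,w^2_n(s_2,q_2)$ from every joint state $(s_1,s_2,q_1,q_2)$. The base case is an equality; the inductive step uses that the one-step joint transition factors as $\mathbf P_1(s_1,a_1,s_2,\bar s_1)\,\mathbf P_2(s_2,a_2,s_1,\bar s_2)$ and that, because $\DFA=\DFA_1\times\DFA_2$ is a product, the automaton updates decouple ($\bar q_1$ a function of $(s_1,q_1)$ only, $\bar q_2$ of $(s_2,q_2)$ only), so the expected value factors into an $S_1$-sum and an $S_2$-sum, each of which is then bounded below by replacing the true coupled neighbour state by the worst case $\min_{s_2\in S_2}$ (resp.\ $\min_{s_1\in S_1}$), which reproduces the recursions for $w^1_{n+1}$ and $w^2_{n+1}$. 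Since $F_1,F_2$ are absorbing, $n$-step reachability implies reachability, so $\mathbb{P}^{\mu_1^{(n)},\mu_2^{(n)}}(\phi_1\wedge\phi_2)\ge w^1_n\,w^2_n$ at the initial state for every $n$; letting $n\to\infty$ and using that $w^i_n$ is non-decreasing with $\lim_n w^i_n=\sup_{\mu_i}\inf_{s_{3-i}\in S_{3-i}}\mathbb{P}^{\mu_i}(\phi_i)$ (value iteration for the stand-alone reachability game on $G_i\times\DFA_i$ against an adversary choosing the neighbour state) yields \eqref{eq:lower_bound_inf2}.

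For the domination $\eqref{eq:lower_bound_inf}\ge\eqref{eq:lower_bound_inf2}$ I would compare the two products factor by factor and then multiply, both sides being non-negative. Fix the first factor; the second is symmetric. By the inclusion $\Pi_n\subset\Pi_l$ established earlier, every $\mu_1\in\Pi_n$ is admissible in the supremum defining the first factor of \eqref{eq:lower_bound_inf}; and for such a $\mu_1$ the quantity $\mathbb{P}^{\mu_1}(\phi_1\mid s_2(0),s_2(1),\ldots)$ depends on the neighbour trajectory only through the way it drives $\mathbf P_1$, so $\inf_{(s_2(0),s_2(1),\ldots)}\mathbb{P}^{\mu_1}(\phi_1\mid s_2(0),s_2(1),\ldots)=\inf_{s_2\in S_2}\mathbb{P}^{\mu_1}(\phi_1)$ as it appears in \eqref{eq:lower_bound_inf2}. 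Restricting the infimum to the smaller set of neighbour runs that satisfy $\phi_2$ can only increase it, so $\inf_{(s_2(0),s_2(1),\ldots)\models\phi_2}\mathbb{P}^{\mu_1}(\phi_1\mid s_2(0),s_2(1),\ldots)\ge\inf_{s_2\in S_2}\mathbb{P}^{\mu_1}(\phi_1)$; taking the supremum over $\mu_1\in\Pi_n$ on the left (a subfamily of $\Pi_l$) shows that the first factor of \eqref{eq:lower_bound_inf} is at least that of \eqref{eq:lower_bound_inf2}. Multiplying this with the symmetric inequality for the second factor concludes.

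I expect the inductive step of the first part to be the delicate point: one must make sure the minimisation over the adversarial neighbour state is taken \emph{after} the controller commits its action, so that it is genuinely a lower bound for the value attained at the true, uncontrolled neighbour state, and one must check that the product structure of $\DFA$ makes the two sums decouple exactly rather than only up to cross terms. The passage to the infinite horizon (monotone convergence of the $w^i_n$ and optimality of memoryless strategies in the limiting reachability games) is routine, handled exactly as in the corollary following Theorem~\ref{thm:AGI}.
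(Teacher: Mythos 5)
Your proposal is correct and follows essentially the route the paper intends: replay the induction behind Theorem~\ref{thm:lower_prob} with the adversarial neighbour ranging over its entire state space (equivalently, assuming the trivial always-accepting automaton for the neighbour), and obtain the domination of \eqref{eq:lower_bound_inf} over \eqref{eq:lower_bound_inf2} factor-by-factor from $\Pi_n\subset\Pi_l$ together with the infimum being taken over the smaller set of $\phi_2$-satisfying runs. Your placement of $\max_{a_1}$ before $\min_{s_2\in S_2}$ (rather than retaining the three-level $\min$-$\max$-$\min$ with labels ranging over all of $\alphabet^2$) is the right refinement for genuinely communication-free policies and still yields a valid, if slightly more pessimistic, lower bound.
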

\subsection{Putting it All Together}
Theorems~\ref{thm:lower_prob} and \ref{thm:AGI} provide a lower bound for the complete system based on values computed for the individual components via~\eqref{eq:v1} and \eqref{eq:v2}. Each of these equations represents a game (see Fig.~\ref{fig:my_label2}) where the minimizing player first selects a label, the maximizing player then selects an action, and the minimizing player finally resolves the state of their component. 
This game can be solved via RL and the policy extracted for the corresponding component is the one used by the maximizing player.

\section{Case Studies}

We provide a range of case studies to showcase the capability of the assume-guarantee RL approach
\footnote{The implementation is available at \url{https://doi.org/10.5281/zenodo.10377136}}. 
We employ Minimax-Q Learning~\citep{Littma96} (detailed in Appendix) 
and its deep learning extension to compute policies for each of the components that maximizes the lower bound on the global satisfaction of the objective.
The computations are performed on a laptop with Intel(R) Core(TM) i7-8650U CPU @ 1.90GHz, and 16GB RAM.

\subsection{Multi-Agent Grid World}
Consider a $4\times 3$ grid with two agents located at the center-top and center-bottom of the grid. The task for agents is to swap positions while avoiding each other. Due to the deterministic nature of the agent movements, the probability of satisfying such a specification is expected to be one, and the task of the learning is to find a policy that satisfies such a specification. Note that collision avoidance requires some form of communication between agents. We decompose the specification into two assume-guarantee specifications
\begin{equation*}
    \phi_1 = (\mathsf A_2 \Rightarrow \lozenge \mathsf B_1) \quad \text{ and }\quad
    \phi_2 = (\mathsf A_1 \Rightarrow \lozenge \mathsf B_2),
\end{equation*}
where $\mathsf A_i$ indicates the knowledge of the path of agent $i$, and $\mathsf B_i$ is its target location, for $i\in\{1,2\}$.
The optimal policy for the first agent is to go left, then up for three time steps, and then turn right (Red arrow in Fig.~\ref{fig:grid-reward} \textbf{Left}). The other agent's optimal policy is to go right, then down for three time steps, and then turn left (blue arrow in Fig.~\ref{fig:grid-reward} \textbf{Left}).

Using both minimax-Q learning and deep minimax-Q learning, we successfully learn the optimal policies of agents.
Note that since the two agents have the same dynamics, and the grid-world environment and the specifications and symmetric, we are able to learn the policy of one agent and use this symmetry to get the policy of the other agent.
Fig.~\ref{fig:grid-reward} (\textbf{Right}) shows the satisfaction probability as a function of training step. In training steps $i\times 10^5$, $i\in\{0,1,2,3,4,5\}$, we apply to the agents the policies learned up to that training step, and estimate the satisfaction probability empirically using Monte-Carlo simulations.
The learning procedure takes $10$ minutes.
Since both the agent and the policies are deterministic in this case study, the satisfaction probability is either zero or one.
The initial random policy does not satisfy the specification and the learning finds the optimal policy. 

\begin{figure}
    \centering
    \begin{tikzpicture}[scale=0.85]
	\draw[step=1cm,gray] (0,0) grid (3, 4);
	\draw[ultra thick, ->, >=stealth, draw=blue!70!white] (1.6,3.5) -- (2.5,3.5) -- (2.5,0.5) -- (1.6, .5);
	\draw[ultra thick, ->, >=stealth, draw=red!70!white] (1.4,.5) -- (.5,.5) -- (.5,3.5) -- (1.6, 3.5);
	\draw[thick] (0,0) rectangle (3,4);
        \path[thick] (0,-0.5) rectangle (3,-0.5);
\end{tikzpicture}
 \hspace{0.14cm}
    \includegraphics[scale=0.28]{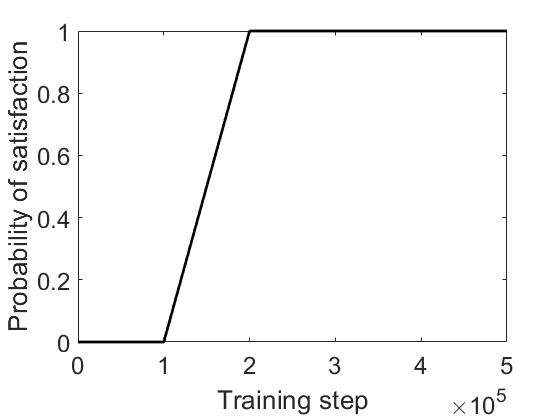}
    \caption{Multi-Agent Grid World. \textbf{Left:} Learned (optimal) policy for both agents. \textbf{Right:} The probability of satisfaction computed with policies learned in each training step.
    }
    \vspace{-1em}
    \label{fig:grid-reward}
\end{figure}

\subsection{Room Temperature Control}
Consider a network of rooms that are in a circular topology as in Fig.~\ref{fig:rooms}. Each room has a heater and its temperature is affected by the temperature of two adjacent rooms.  The evolution of temperature adapted from the work by~\citep{meyer}, can be described as
\begin{align}
    T^{j}_{k+1} &= T^{j}_{k} + \alpha(T^{j+1}_k + T^{j-1}_k - 2T^{j}_k)\nonumber\\
    &\quad\, + \beta(T^e - T^{j}_k ) + \gamma(T^{h} - T^{j}_k)u_k^{j},
\end{align}
where $T^{j}_{k}$ is the $j^{\text{th}}$ room temperature at time step $k$,  $T^e = -1$ is the outside temperature, $T^h = 50$ is the heater temperature, $u_k^{j}\in [0, 1]$ is the control input. The conduction factors are $\alpha = 0.45$,
$\beta = 0.045$, and $\gamma = 0.09$.  
The local specification for the $j^{\text{th}}$ room is
\begin{equation*}
    \phi_j = \square^n (T^j\in [17, 23]) \wedge \lozenge^n (T^j\in [21, 22]),
\end{equation*}
with $n=40$ time steps.
Note that the dynamics of the agents, the topology of the network, and the local specifications are symmetric. Therefore, we train once with three agents to maximize the probability $\mathbb P(\phi_j)$ with the assumption that rooms $(j-1)$ and $(j+1)$ satisfy respectively $\phi_{j-1}$ and $\phi_{j+1}$.
We use deep minimax-Q learning to learn the policies, which took $10$ minutes.
Fig.~\ref{fig:rooms} shows the empirical satisfaction probability (with a $95\%$ confidence interval) as a function of training step by running $1000$ Monte-Carlo simulations under the policy learned at each training step. 

\begin{figure}
    \centering
      \begin{tikzpicture}[>=latex,font=\sffamily,semithick,scale=1.75]
        \fill [blue!25] (0,0) -- (67.5:1) arc [end angle=0, start angle=67.5, radius=1] -- cycle;
        \fill [yellow!25] (0,0) -- (45:1) arc [end angle=0, start angle=45, radius=1] -- cycle;
        \fill [green!25] (0,0) -- (22.5:1) arc [end angle=0, start angle=22.5, radius=1] -- cycle;
        \draw [thick] (0,0) circle (1);
        \foreach \angle in {90,67.5,...,-67.5}
            \draw (\angle:1) -- (\angle-180:1);
        \node [circle,thick,fill=white,draw=black,align=center,minimum size=3cm] at (0,0) {};
    \end{tikzpicture}
    \includegraphics[scale=0.35]{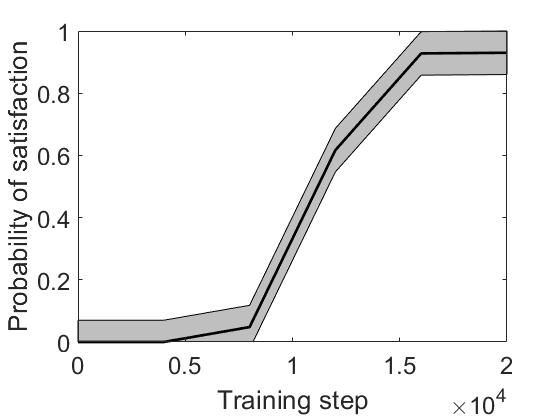}
    \caption{
    Room Temperature Control. {\bf Top:} Sixteen rooms in a circular topology.
    {\bf Bottom:}
    The probability of satisfaction computed with policies learned in each training step. 
}
    \label{fig:rooms}
    \vspace{-0.5cm}
\end{figure}

\subsection{Traffic Lights Control}
Consider the traffic intersection signal control problem for the kind of networks shown in Fig.~\ref{fig:traffic-schm}
with $n^2$ intersections each with a traffic light. 
The goal is to design policies to change the colors of the traffic lights.
The desired property is to keep the number of cars behind each traffic light at most $20$ in the next $100$ time steps: $\phi = \wedge_{i=1}^{n^2} \phi_i$ with
\begin{equation*}
    \phi_i \!=\! \square^{100} (\text{cars behind the $i^{th}$ intersection is at most $20$}).
\end{equation*}
A centralized approach for controlling the traffic light is not scalable for large traffic networks.
We decompose the network into $n^2$ intersections as agents with the properties $\phi_i$. The goal of each agent is to maximize $\mathbb P (\phi_i)$ under the assumption that the neighboring intersections satisfy their local specifications. The information communicated to each agent from other neighboring agents is the following: number of cars behind the traffic light is less than $10$ cars, between $10$ and $20$ cars, or more than $20$ cars.

We utilize the symmetric properties of the traffic network and use deep minimax-Q learning.
The learning procedure takes $100$ minutes.
Fig.~\ref{fig:traffic_prob}
shows the empirical satisfaction probability (with $95\%$ confidence interval) as a function of training step by running $1000$ Monte-Carlo Simulations under the policy learned in each training step. 

\begin{figure}
\vspace{-2em}
    \centering
    \includegraphics[scale=0.35]{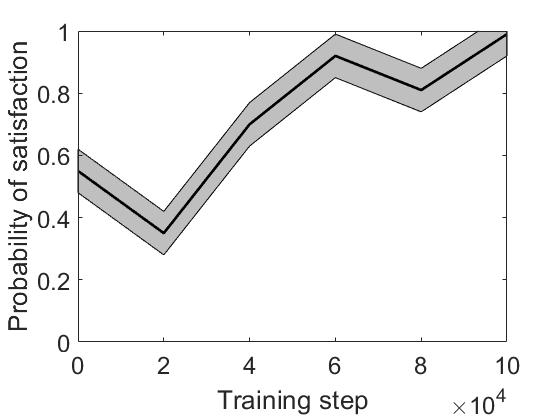}
    \caption{Traffic Lights Control (with nine intersections).
    The probability of satisfaction computed with policies learned in each training step. 
    }
    \label{fig:traffic_prob}
\end{figure}

\section{Related Work}

To address the complexity of large scale systems one approach is to decompose the system into subsystems or the general task into sub-tasks. There is a large body of literature about decomposing tasks into sub-tasks \citep{pnueli1985transition, benveniste2018contracts, chen2019compositional, eqtami2019quantitative}. Given a large-scale system decomposed into subsystems with their own sub-tasks, the question is to synthesize controllers for these subsystems to satisfy the global specification. Assume-guarantee reasoning intends to design controllers for the subsystems assuming the environment behaves in a certain way (the assumption). We, then, need to make sure the environment actually satisfies the assumption. 
The design of local policies has been studied in several disciplines \citep{amato2015scalable, guestrin2001multiagent, matignon2012independent, hammond2021multi, jothimurugan2021compositional, abate2021rational, yang2005survey, vinyals2019grandmaster}. In \emph{machine learning}, this problem is considered as the policy synthesis for partially observable MDPs; in \emph{control theory}, it takes the form of control synthesis for decentralized systems; while in \emph{game theory} it is treated as stochastic games with imperfect information.

Most of the research on assume-guarantee reasoning require knowing the model of the systems~\cite{henzinger1998you,kwiatkowska2010assume}. Model-based learning using the $L^\star$ algorithm is proposed by Angluin~\citet{fisman2018inferring}. 
The works \cite{bobaru2008automated, cobleigh2003learning, gheorghiu2008automated} utilize the $L^\star$ algorithm for assume-guarantee reasoning using abstraction-refinement, where counter-examples are generated to learn policies or verification.

There is also a large body of literature on decentralized control of dynamical systems, in which controllers are designed locally~\citep{malikopoulos2018decentralized, bakule2008decentralized, siljak2011decentralized, lavaei2019compositional, zhang2016assume}. The goal is to find a controller with the same performance as provided by a centralized controller. 
The assume-guarantee reasoning follows the same goal as persistence monitoring problem. One can consider the automaton associated with every subsystem as an event-trigger system. A controller is designed by just knowing the events of other subsystems. 

Another paradigm is to model the system as partially observable Markov decision processes (POMDPs) or partially observable Markov games \citep{hansen2004dynamic,brown2018superhuman}. For collaborative agents, the state labels of other subsystems are imperfect information available for the policy synthesis. However, the problem of finding optimal policies for POMDPs is PSPACE-hard~\cite{mundhenk2000complexity}.

\section{Conclusion}
We studied the problem of satisfying temporal properties modeled by deterministic finite automata (DFA) on systems with inherent structures.
A number of well-establish formal languages, including linear temporal logic (LTL) on finitary traces, co-safe LTL, and LTL with only past operators, can be compiled into DFA.
We provided a modular data-driven approach that uses local RL on assume-guarantee contracts and used the dynamic programming characterization of the solution to prove a lower bound for the satisfaction probability of the global specification. The agent utilizes the
information it receives through communication and observing transitions in a coarse abstraction of neighboring systems.
In the future, we plan to go beyond specifications captured by DFA and extend our approach to full LTL specifications using approximation methods with formal guarantees. 

\newpage
\section{Acknowledgments}
M. Kazemi's research is funded in part by grant EP/W014785/1. The research of S. Soudjani is supported by the following grants: EPSRC EP/V043676/1, EIC 101070802, and ERC 101089047. This work was supported in part by the NSF through grant CCF-2009022 and the NSF CAREER award CCF-2146563.
\bibliography{ref}
\appendix
\onecolumn

\section{From Concurrent to Turn-based Games}
\label{sec:concurrent}
We are interested in designing deterministic (non-stochastic) control for a network of MDPs.
Since concurrent stochastic games are not determined in deterministic  (non-stochastic) policies~\citep{de2007concurrent}, we consider turn-based approximation of the games where the choices of the controller are visible to the environment. 
This assumption is natural as often the controller needs to be physically implemented and the environment may have access to or may learn such implementation of the controller and exploit this knowledge adversarially. 
The control policies on the resulting turn-based games are more pessimistic and provide a lower bound on the value of the game.

For a concurrent Markov game $G = (S,s_0, A, B, \mathbf P, \lab)$, we define a turn-based game $G^{\bullet} = (S^{\bullet}, s_0, A^{\bullet}, \mathbf P^{\bullet}, \lab^{\bullet})$ over an extended state space $S^{\bullet} = S \cup (S \times A)$ and action space $A^{\bullet} = A \cup B$ with transition probabilities $\mathbf P^{\bullet}: S^{\bullet} \times A^{\bullet} \times S^{\bullet} \to [0,1]$. 
The state space $S^{\bullet}$ is  partitioned into set $S^{\bullet}_1 = S$ of states controlled by Player~1 (controller states) and the set $S^{\bullet}_2 = (S \times A)$ of states controlled by Player~2 (environment states). 
Whenever the system is in a state $s\in S^{\bullet}_i$, for $i\in\{1,2\}$, Player $i$ is allowed to take an action. The transition function is such that 
\begin{eqnarray*}
\mathbf P^{\bullet}(s, c)(s') = 
\begin{cases}
1 & \text{if } s \in S^{\bullet}_1 \text{ and } s' = (s, c) \in S^{\bullet}_2\\
\mathbf P (s, a, c) & \text{ if  } s = (p, a) \in S^{\bullet}_2  \\
0 & \text{ otherwise.}
\end{cases}
\end{eqnarray*}

\begin{theorem}
For a concurrent Markov game $G = (S,s_0, A, B, \mathbf P, \lab)$ and its corresponding turn-based game $G^{\bullet}$,  the value of the game in $G^{\bullet}$ is equal to the lower value of the concurrent game. In other words, for every policy of Player 1 in $G^{\bullet}$, there exists a policy of Player 1 in $G$, that provides the same value.
\end{theorem}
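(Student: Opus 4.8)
The plan is to read $G^{\bullet}$ as nothing more than the concurrent game $G$ played under the convention that Player~1 commits its move first and Player~2 observes that move before responding; by definition this is the operational meaning of the \emph{lower value} of $G$. Accordingly I would proceed in three steps: (i) note that $G^{\bullet}$ is a finite turn-based stochastic game (with the objective $\phi$, i.e.\ reachability in the product with the specification DFAs, folded into the state space), hence determined, so its value $v^{\bullet}(s)$ over pure — equivalently mixed — strategies is well defined; (ii) set up a faithful correspondence between strategy profiles of $G^{\bullet}$ and strategy profiles of $G$ in which Player~2 is allowed to react to Player~1's current action; (iii) show this correspondence preserves the induced distribution on state sequences, hence preserves $\mathbb{P}(\phi)$, and read off the claimed identity together with the ``in other words'' consequence.

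For step (ii): a run prefix of $G^{\bullet}$ ending in a Player-1 state alternates $s_0,(s_0,a_0),s_1,(s_1,a_1),\dots,s_k$, and erasing the intermediate pair states yields $s_0 a_0 s_1 a_1 \cdots s_k$, exactly the information on which a perfect-recall Player-1 strategy of $G$ may condition; this gives a map from Player-1 strategies of $G^{\bullet}$ to those of $G$. A Player-2 strategy in $G^{\bullet}$ acts only at pair states $(s,a)\in S\times A$, and such a state is precisely ``current game state $s$ together with the move $a$ just played by Player~1,'' so Player-2 strategies of $G^{\bullet}$ are in bijection with the ``reactive'' Player-2 strategies of $G$ that may see Player~1's current action. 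This bijection is onto each of the relevant classes, and it restricts to the usual (non-reactive) Player-2 strategies of $G$, which it embeds as a subclass — this is what makes the final inequality work.

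For step (iii): by induction on $k$ one shows that for a pair of corresponding profiles $(\pi_1^{\bullet},\pi_2^{\bullet})$ in $G^{\bullet}$ and $(\pi_1,\pi_2)$ in $G$, the law of $(s_0,\dots,s_k)$ under the $G^{\bullet}$-play (after erasing pair states) equals the law of the state sequence under the $G$-play; the inductive step is immediate from the definition of $\mathbf P^{\bullet}$, since a Player-1 move from $s$ is deterministic into $(s,a)$ and the subsequent Player-2 move from $(s,a)$ uses exactly $\mathbf P(s,a,b)$. Since $\phi$ depends only on the state sequence, $\mathbb{P}^{\pi_1^{\bullet},\pi_2^{\bullet}}_s(\phi)=\mathbb{P}^{\pi_1,\pi_2}_s(\phi)$ for corresponding pairs. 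Taking $\sup$ over Player~1 and $\inf$ over Player~2 on both sides gives $v^{\bullet}(s)=\sup_{\pi_1}\inf_{\pi_2\ \mathrm{reactive}}\mathbb{P}^{\pi_1,\pi_2}_s(\phi)$, which is the lower value of $G$. For the reformulation, take any Player-1 policy $\pi_1^{\bullet}$ of $G^{\bullet}$ and let $\pi_1$ be its image; since every ordinary Player-2 strategy of $G$ is in particular reactive, $\inf_{\pi_2}\mathbb{P}^{\pi_1,\pi_2}_s(\phi)\ge\inf_{\pi_2\ \mathrm{reactive}}\mathbb{P}^{\pi_1,\pi_2}_s(\phi)=\inf_{\pi_2^{\bullet}}\mathbb{P}^{\pi_1^{\bullet},\pi_2^{\bullet}}_s(\phi)$, so $\pi_1$ guarantees in $G$ at least the value $\pi_1^{\bullet}$ guarantees in $G^{\bullet}$, with equality when $\pi_1^{\bullet}$ is optimal.

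The main obstacle I expect is not any single computation but pinning down the definitions so the equality is literally true rather than merely morally true: one must fix precisely which notion of lower value of the concurrent game is meant — the one in which Player~1's \emph{realized} action (not merely its mixed strategy) is visible to Player~2, since only this one matches $G^{\bullet}$ — and one must check that Player-1 strategies in $G^{\bullet}$ cannot covertly depend on Player~2's past moves (these are hidden inside the transitions, so perfect recall in $G^{\bullet}$ coincides with state-and-own-action recall in $G$), and cite the correct determinacy/positional-optimality result for finite turn-based stochastic reachability games to justify that $v^{\bullet}$ exists and is attained. The measure-preservation induction itself is routine once the state spaces have been matched up.
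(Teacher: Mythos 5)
Your argument is correct, and in fact the paper states this theorem without any proof at all (it appears in the appendix on turn-based approximations with no accompanying argument), so there is no paper proof to compare against; your write-up supplies the standard justification one would expect. The three-step structure — determinacy of the finite turn-based reachability game, the history-erasing bijection between Player-1 strategies of $G^{\bullet}$ and perfect-recall Player-1 strategies of $G$ together with the identification of Player-2's pair-state decisions with ``reactive'' responses, and the measure-preservation induction — is exactly the right skeleton. You also correctly isolate the only genuinely delicate point: the theorem is literally true only for the notion of lower value in which Player~2 observes Player~1's realized action, which is the reading the paper intends (it motivates $G^{\bullet}$ by saying the controller's choices are visible to the environment). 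One small sharpening: for \emph{pure} Player-1 strategies (the case the paper cares about, since it restricts to deterministic policies), the reactive and non-reactive infima over Player-2 strategies coincide, because a reactive response $\pi_2(h,a)$ can be simulated by $\pi_2'(h)=\pi_2(h,\pi_1(h))$; so the ``provides the same value'' clause holds with equality for every Player-1 policy, not only for optimal ones as your final sentence hedges.
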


\section{Minimax-Q Learning Algorithm}
\label{sec:minimax}
Consider a reward function $\Rwd: S {\times} A {\times} S \to \Real$ for
a Markov game $G$.
From initial state $s_0$, the game evolves by having the player that controls $s(n)$ at time step $n$ select an action $a(n+1) \in A(s(n))$. The state then evolves under distribution $\mathbf P(s(n),a(n),\cdot)$ resulting in a next state $s(n+1)$ and reward $r_{n+1} = \Rwd(s(n), a(n), s(n{+}1))$.
Assume that the state space is partitioned as $S = S_\mMAX \cup S_\mMIN$ into the space of a Max player $S_\mMAX$ and the space of a Min player $S_\mMIN$. 
Given a discount factor $\gamma \in [0,1)$, the payoff (from player Min to player Max) is the $\gamma$-discounted sum of rewards, \emph{i.e.,}
$\sum_{n=0}^\infty r_{n+1}\gamma^n$.
The objective of player Max is to maximize the expected payoff, while the objective of player Min is the opposite.
	
A policy is a mapping from states to actions. A policy $\rho_* \in \Pi_{\mMAX}$ is optimal if it maximizes
\begin{equation*}
	\inf_{\xi \in \Pi_{\mMIN}} \EE^s_{\rho, \xi} \Bigl[\sum_{n=0}^\infty \Rwd(s(n), a(n), s(n+1)) \gamma^n\Bigr],
	\end{equation*}
	which is sum of rewards under the worst policy of player Min.
	The optimal policies for player Min are defined analogously.
	The goal of RL is to compute optimal policies for both players with samples from the game, without \emph{a priori}
	knowledge of the transition probability and rewards.
	The RL solves this by learning a state-action value function, called $Q$-values, defined as 
	\[Q_{\rho,\xi}(s,a) = \EE^{s,a}_{\rho,\xi} \Bigl[\sum_{n=0}^\infty \Rwd(s(n), a(n), s(n+1)) \gamma^n\Bigr],
	\]
	where $\rho \in \Pi_{\mMAX}$ and $\xi \in \Pi_\mMIN$.
	Let  
	$Q_*(s,a) = \sup_{\rho \in \Pi_\mMAX} \inf_{\xi \in \Pi_\mMIN}  Q_{\rho,\xi}(s, a)$ be the optimal value.
	Given $Q_*(s,a)$, one can extract the policy for both players by
	selecting the maximum value action in states controlled by player Max and the
	minimum value action in states controlled by player Min. 
	The following Bellman optimality equations characterize the optimal solutions and forms the basis for computing the Q-values by dynamic programming:
	\[
	Q_*(s, a) {=} \sum_{s' \in S}\mathbf P(s,a,s') {\cdot} \Bigl( r(s, a, s') {+} \gamma {\cdot} \hspace{-1em}\opt_{a'\in A(s')} Q_*(s',a')\Bigr),
	\]
	where $\opt$ is $\max$ if $s' \in S_\mMAX$ (the state space of the player Max) and $\min$ if $s' \in S_\mMIN$ (the state space of the Min player).
	Minimax-Q
	learning~\citep{Littma94} estimates the dynamic programming update from the
	stream of samples by performing the following update at each time step: 
\[
	Q(s(n),a(n)) :=  (1-\alpha_n) Q(s(n), a(n)) +\alpha_k(r_{n+1} + \gamma \!\!\opt_{a' \in A(s(n+1))}\!\! Q(s(n+1),a')),
\]
	where $\alpha_n \in (0,1)$, a hyperparameter, is the learning rate at time step $n$.
 The Minimax-Q algorithm produces the
	controller directly, without producing estimates of the unknown
	system dynamics: it is \emph{model-free}. 
	Moreover, it reduces to classical Q-learning~\citep{watkins1989learning} for MDPs, \emph{i.e.,} when $S_{\mMIN} = \emptyset$.
	
	\begin{theorem}[Minimax-Q Learning~\citep{Littma96}]
		The minimax-Q learning algorithm converges to the unique fixpoint $Q_*(s,a)$ if $r_n$ is bounded, the learning rate satisfies the Robbins-Monro conditions, i.e., 
		$\sum_{n=0}^\infty \alpha_n = \infty \text{  and  } \sum_{n=0}^\infty \alpha_n^2 < \infty$,
		and all state-action pairs are seen infinitely often.
	\end{theorem}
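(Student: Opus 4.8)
The plan is to recognize the Minimax-Q update as an asynchronous stochastic-approximation iteration converging to the fixed point of a $\gamma$-contraction, and to invoke the standard convergence machinery for such iterations.

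First I would make explicit the minimax Bellman operator $T$ on the finite-dimensional space of $Q$-functions, $(TQ)(s,a) := \sum_{s'\in S}\mathbf P(s,a,s')\bigl(r(s,a,s') + \gamma\opt_{a'\in A(s')}Q(s',a')\bigr)$, where $\opt$ is $\max$ when $s'\in S_\mMAX$ and $\min$ when $s'\in S_\mMIN$; by hypothesis $Q_*$ from the earlier display is exactly a fixed point of $T$. The structural lemma is that $T$ is a $\gamma$-contraction in the sup-norm: for any $Q_1,Q_2$ and any $s'$, the scalar selector $Q(s',\cdot)\mapsto\opt_{a'}Q(s',a')$ is nonexpansive, $|\opt_{a'}Q_1(s',a') - \opt_{a'}Q_2(s',a')|\le\max_{a'}|Q_1(s',a')-Q_2(s',a')|$, and this bound holds uniformly regardless of whether the operator at $s'$ is a $\max$ or a $\min$. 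Averaging against $\mathbf P(s,a,\cdot)$ and scaling by $\gamma$ gives $\|TQ_1-TQ_2\|_\infty\le\gamma\|Q_1-Q_2\|_\infty$, so by Banach's theorem $T$ has a unique fixed point, which must therefore be $Q_*$.

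Next I would rewrite the Minimax-Q update in stochastic-approximation form. If the pair $(s,a)$ is the one updated at time $n$ (all other components are left unchanged), the update is $Q_{n+1}(s,a) = (1-\alpha_n)Q_n(s,a) + \alpha_n\bigl((TQ_n)(s,a) + w_n\bigr)$ with noise $w_n := r_{n+1} + \gamma\opt_{a'}Q_n(s(n{+}1),a') - (TQ_n)(s,a)$. Since $s(n{+}1)\sim\mathbf P(s,a,\cdot)$, the noise has zero conditional mean given the history; since $r_n$ is bounded and the iterates stay bounded, it has conditionally bounded second moment (dominated by an affine function of $\|Q_n\|_\infty^2$). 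I would then invoke a standard asynchronous stochastic-approximation theorem for contraction mappings (of the type due to Jaakkola, Jordan, and Singh, or Tsitsiklis), whose hypotheses are precisely: $T$ a sup-norm contraction with fixed point $Q_*$; per-component step sizes with $\sum_n\alpha_n=\infty$ and $\sum_n\alpha_n^2<\infty$; zero-mean, conditionally variance-bounded noise; and every component updated infinitely often. These are exactly the stated hypotheses plus the contraction just proved, so the theorem yields $Q_n\to Q_*$ almost surely, and policy extraction by picking $\argmax$/$\argmin$ actions is then immediate.

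I expect the main points requiring care to be (a) the uniform nonexpansiveness of the mixed $\max$/$\min$ selector -- the one place where the two-player game structure, rather than a plain MDP, enters the argument -- and (b) verifying the boundedness and square-integrability of the iterates and noise needed to apply the stochastic-approximation lemma as a black box; both are routine but should be stated explicitly rather than elided.
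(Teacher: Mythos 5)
Your proposal is correct and follows essentially the same route as the cited source (the paper itself gives no proof, deferring to Littman~1996, whose argument is exactly this: the minimax Bellman operator is a sup-norm $\gamma$-contraction because the $\max$/$\min$ selector is nonexpansive, and the update is an asynchronous stochastic-approximation iteration to which the Jaakkola--Jordan--Singh/Tsitsiklis convergence theorem applies under the Robbins--Monro and infinite-visitation conditions). The only point to state rather than assume is the almost-sure boundedness of the iterates $Q_n$, which you correctly flag; it follows from bounded rewards and $\gamma<1$ by the standard argument.
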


\section{Proof of Theorem~\ref{thm:lower_prob}}
\begin{proof}
We apply induction on $n$ and use the dynamic programming equations \eqref{eq:dp}. For $n=0$ or when $(q_1,q_2)\in F_1\times F_2$, both sides of \eqref{eq:value_function} are the same. Suppose the inequality is true for $n$. We show that it also holds for $n+1$ and $(q_1,q_2)\notin F_1\times F_2$.
\begin{equation}
\begin{array}{ll}
    v_{n+1}(s_1,s_2, q_1, q_2) &= \mathbb E^{\mu_1,\mu_2}\left[ v_n(\bar s_1, \bar s_2, \bar q_1, \bar q_2)\,|\, s_1, s_2, q_1, q_2 \right]\\
    &\geq
    \mathbb E^{\mu_1,\mu_2}\left[ v_n^1(\bar s_1, \bar q_1, \bar q_2)v_n^2(\bar s_2, \bar q_1, \bar q_2)\,|\, s_1, s_2, q_1, q_2 \right]\\
     & \geq
     \sum_{\bar s_1}v^1_n(\bar s_1, \bar q_1, \bar q_2)\mathbf P_1(s_1,a_1,s_2, \bar s_1)\\
     &\qquad\qquad
     \cdot\sum_{\bar s_2}v^2_n(\bar s_2, \bar q_1, \bar q_2)\mathbf P_2(s_2, a_2,s_1,\bar s_2).
\end{array}
\end{equation}
While we get a product structure in the right-hand side, both terms in the product are functions of $s_1$ and $s_2$. In order to make the first term independent of $s_2$ and the second term independent of $s_1$, we compute the worst case with respect to these variables by first fixing their labels, obtaining the input, and then the worst case with respect to the label:
\begin{equation}
\begin{array}{ll}
     &v_{n+1}(s_1,s_2, q_1, q_2)\geq \\
     &\min_{\bar\ell\in \lab_2^{+}(q_2)}\max_{a_1}\min_{s_2\in \lab_2^{-1}(\bar\ell)}%
     \sum_{\bar s_1}v^1_n(\bar s_1, \bar q_1, \bar q_2)\mathbf P_1(s_1,a_1,s_2, \bar s_1)\\
     &
     \cdot \min_{\bar \ell\in\lab_1^{+}(q_1)}\max_{a_2}\min_{s_1\in \lab_1^{-1}(\bar\ell})\sum_{\bar s_2}v^2_n(\bar s_2, \bar q_1, \bar q_2)\mathbf P_2(s_2, a_2,s_1,\bar s_2)
    \\
    & = v^1_{n+1}(s_1, q_1, q_2) \cdot v^2_{n+1}(s_2, q_1, q_2).
\end{array}
\end{equation}
This will give us the policies
\begin{align*}
    & a_1 = \mu_1(s_1, \bar q_1, \bar q_2) = \mu_1(s_1, q_1, q_2, \lab_2(s_2)),\\
    & a_2 = \mu_2(s_2, \bar q_1, \bar q_2) = \mu_2(s_2, q_1, q_2, \lab_1(s_1)),
\end{align*}
 where $s_i$ is the current state of $G_i$, $\bar q_i$ is the updated mode of $\DFA_i$: $\bar q_i=\sigma_i(q_i, \lab_i(s_i))$, and $\lab_i^{+}(q_i)$ is the set of all one step reachable labels $\bar \ell$ from current state of the automaton $q_i$ where $\lab_i^{+}(q_i) := \{\bar\ell\,|\, \sigma_i(q_i, \bar \ell)\neq\emptyset\}$, \,$i\in\{1,2\}$ . Essentially, for the model $G_i$ we let the other model $G_2$ to take a transition in $\DFA_2$ and we let the policy of $G_1$ to depend on this particular transition. For the purpose of the dynamic programming, we have to take the worst case behavior with respect to such possible transitions of $G_2$. We get the similar structure for the policy of $G_2$.
\end{proof}

\section{Extension to MDPs with Continuous Spaces}
Although we presented our results for finite MDPs, the results also holds for MDPs with continuous (uncountable) state spaces \citep{SA13,lavaei2020ICCPS}. The dynamic programming formulation in \eqref{eq:dp}, Theorem~\ref{thm:lower_prob} and the assume-guarantee interpretation in Theorem~\ref{thm:AGI} remains the same. All the steps of the proof of statements remain true after replacing sums over finite states with integrals over continuous spaces.

\begin{figure}
    \centering
    \includegraphics[scale=0.3]{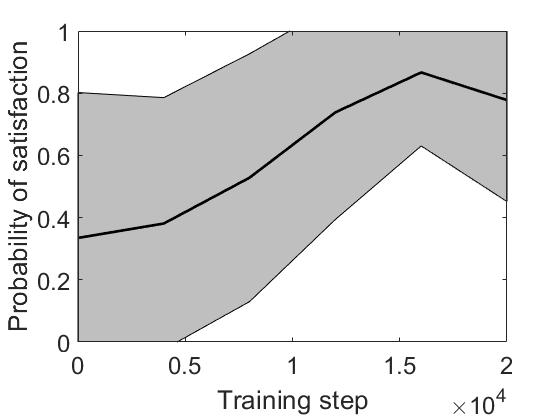}
    \includegraphics[scale=0.3]{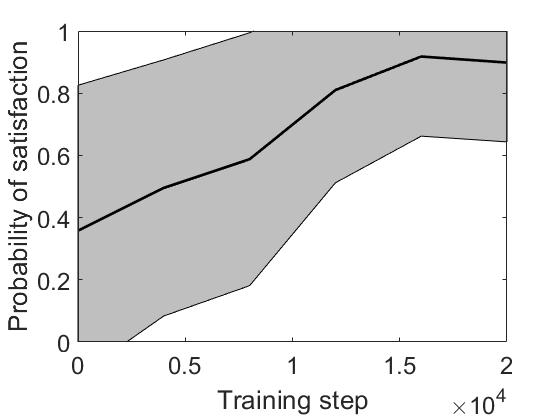}
    \caption{Room temperature control. Probability of satisfying assume-guarantee specifications for two representative rooms with $95\%$ confidence intervals.}
    \label{fig:Tem_local}
\end{figure}

\begin{figure}
    \centering
    \includegraphics[scale=0.3]{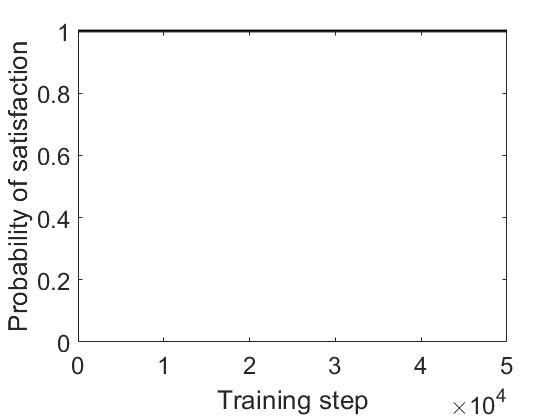}
    \includegraphics[scale=0.3]{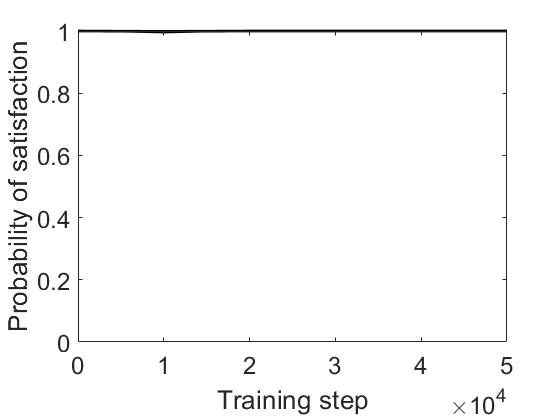}
    \includegraphics[scale=0.3]{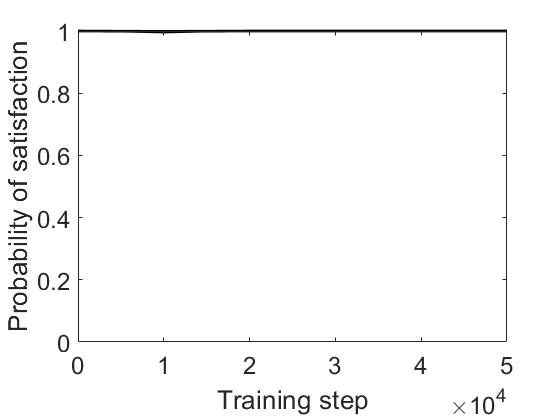}
    \includegraphics[scale=0.3]{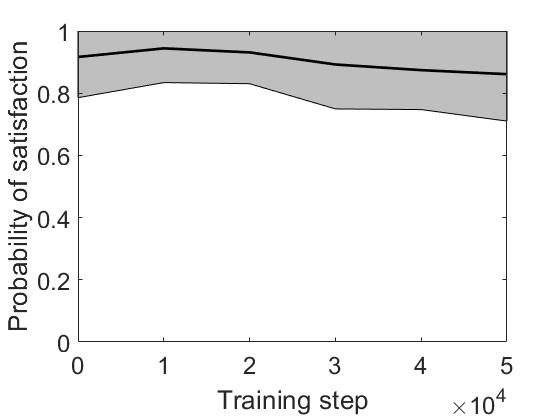}
    \caption{Traffic light control environment. Probability of satisfaction for a $3\times 3$ grid for each intersection traffic light.}
    \label{fig:Traffic_local_3}
\end{figure}

\begin{figure}
    \centering
    \includegraphics[scale=0.3]{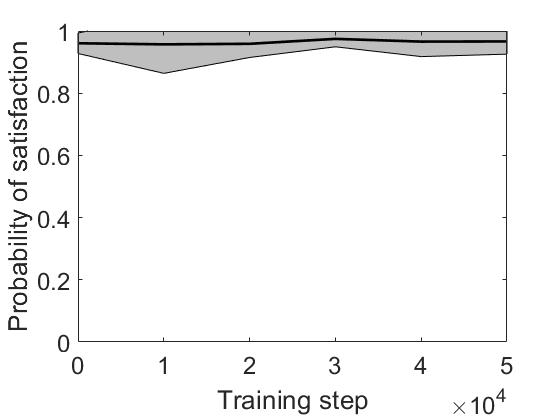}
    \includegraphics[scale=0.3]{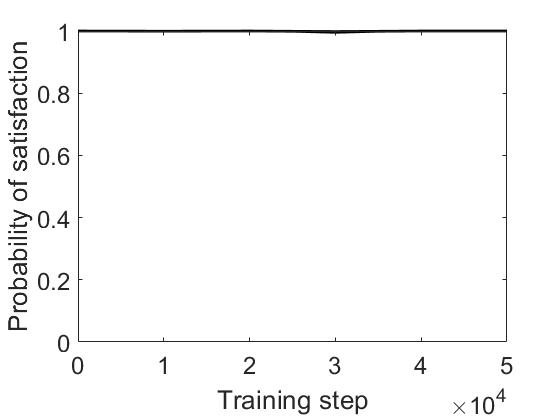}
    \includegraphics[scale=0.3]{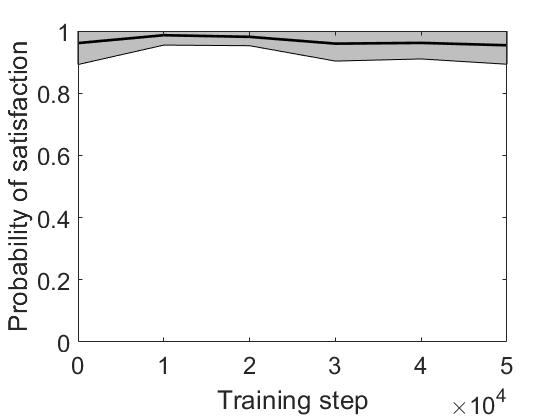}
    \includegraphics[scale=0.3]{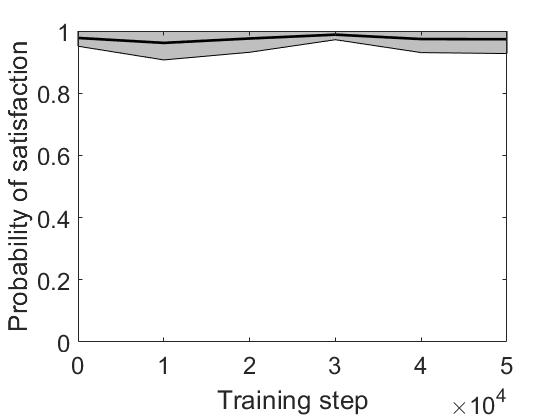}
    \includegraphics[scale=0.3]{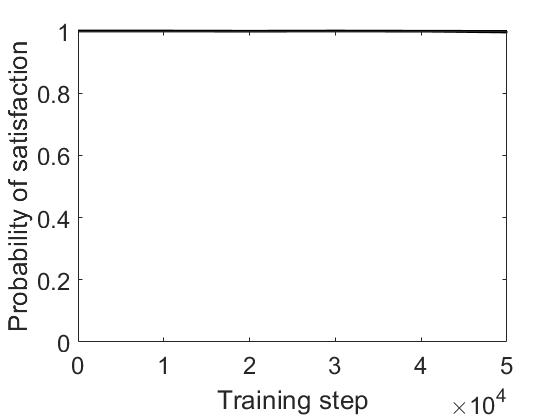}
    \caption{Traffic light control environment. Probability of satisfaction for a $3\times 3$ grid for each intersection traffic light.}
    \label{fig:Traffic_local_5}
\end{figure}

\section{Implementation using independent Q-learning}
We compare our approach to an existing approach in the literature. 
One way to design a policy is to consider the behavior of other agents as part of environment. In this case, we can train multiple agents independent of each other (e.g. Independent Q-learning (IQL)). The issue with this approach is that the environment in presence of other agents is not stationary anymore. In the following we provided the results of implementation of the case studies using IQL:
\begin{figure}
    \centering
    \includegraphics[scale=0.35]{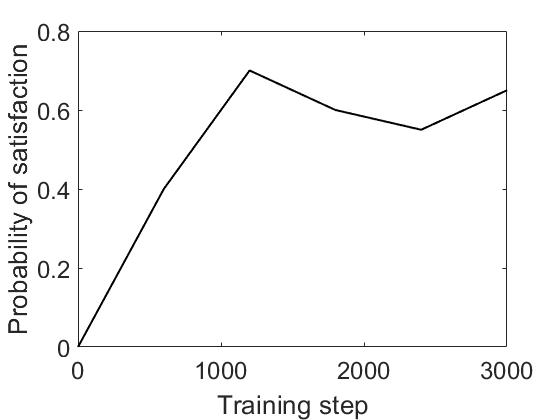}
    \caption{Multi-Agent Grid World. The plots show the probability of satisfaction for $20$ learning procedure.}
    \label{fig:Grid_world_reward}
\end{figure}

\begin{figure}
    \centering
    \includegraphics[scale=0.35]{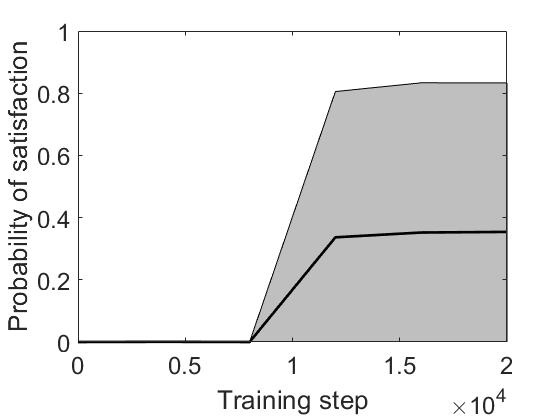}
    \includegraphics[scale=0.35]{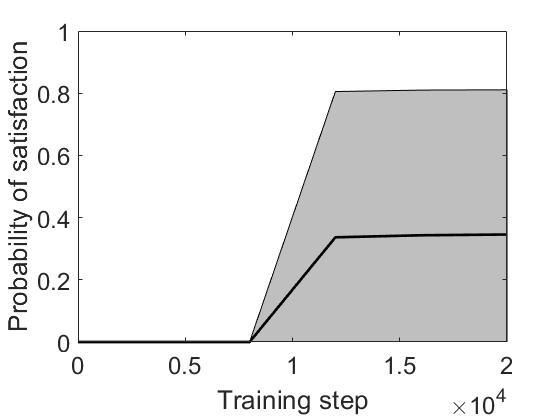}
    \caption{Room temperature control. \textbf{Left:} Probability of satisfaction for agent $1$ and \textbf{Right:} Probability of satisfaction for agent $2$.}
    \label{fig:Room_Reward}
\end{figure}

Fig.~\ref{fig:room-comp} compares the performance of our algorithm with the case of no communication in the room temperature control case study.
\begin{figure}
    \centering
    \includegraphics[scale=0.4]{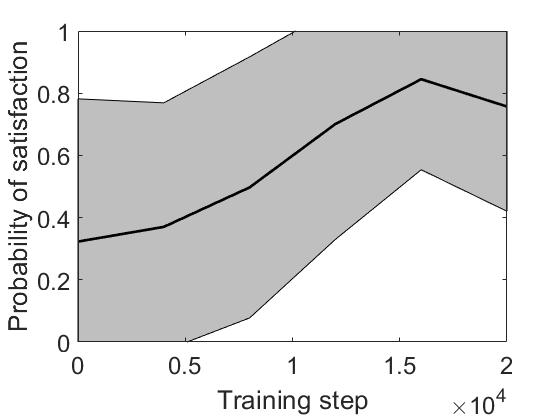}
    \includegraphics[scale=0.4]{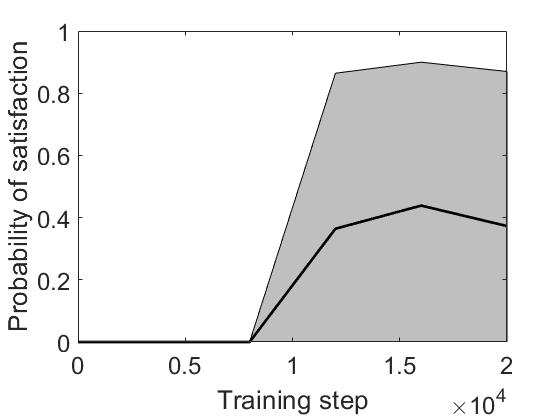}
    \caption{
    Room Temperature Control. 
    The probability of satisfaction computed with policies learned in each episode. The probability is estimated empirically with $95\%$ confidence bounds.
    The probability of satisfaction for our algorithm (\textbf{left}) and independent Q-learning (\textbf{right}) for the case of no communication between the agents.
    }
    \label{fig:room-comp}
\end{figure}

\end{document}